\newcommand{\PreserveBackslash}[1]{\let\temp=\\#1\let\\=\temp}
\newcolumntype{C}[1]{>{\PreserveBackslash\centering}p{#1}}
\newcolumntype{R}[1]{>{\PreserveBackslash\raggedleft}p{#1}}
\newcolumntype{L}[1]{>{\PreserveBackslash\raggedright}p{#1}}
\definecolor{YonseiBlue}{HTML}{183772}
\definecolor{YonseiBlueLight}{HTML}{3065af}
\newcommand{\blue}{\textcolor{YonseiBlueLight}}
\newcommand{\red}{\textcolor{red}}
\newcommand{\traj}{\mathcal{T}}
\newcommand{\s}{\mathbf{s}}
\newcommand{\ac}{\mathbf{a}}
\newcommand{\mb}{\mathbf}
\newcommand{\1}{\mathbbm{1}}
\newtheorem{theorem}{Theorem}
\newtheorem{lemma}{Lemma}
\newcommand{\rebut}[1]{#1}
\title{Model-based Offline Reinforcement Learning with Lower Expectile Q-Learning}
\author{%
    Kwanyoung Park \qquad Youngwoon Lee \\
    Yonsei University \\
    \url{https://kwanyoungpark.github.io/LEQ/}
}
\begin{document}

\maketitle

\begin{abstract}
Model-based offline reinforcement learning (RL) is a compelling approach that addresses the challenge of learning from limited, static data by generating imaginary trajectories using learned models. However, these approaches often struggle with inaccurate value estimation from model rollouts. In this paper, we introduce a novel model-based offline RL method, Lower Expectile Q-learning (LEQ), which provides a low-bias model-based value estimation via lower expectile regression of $\lambda$-returns. Our empirical results show that LEQ significantly outperforms previous model-based offline RL methods on long-horizon tasks, such as the D4RL AntMaze tasks, matching or surpassing the performance of model-free approaches and sequence modeling approaches. Furthermore, LEQ matches the performance of state-of-the-art model-based and model-free methods in dense-reward environments across both state-based tasks (NeoRL and D4RL) and pixel-based tasks (V-D4RL), showing that LEQ works robustly across diverse domains. Our ablation studies demonstrate that lower expectile regression, $\lambda$-returns, and critic training on offline data are all crucial for LEQ.
\end{abstract}

\section{Introduction}
\label{sec:introduction}

One of the major challenges in offline reinforcement learning (RL) is the overestimation of values for out-of-distribution actions due to the lack of environment interactions~\citep{levine2020offline, kumar2020conservative}. Model-based offline RL addresses this issue by generating additional (imaginary) training data using a learned model, thereby augmenting the given offline data with synthetic experiences that cover out-of-distribution states and actions~\citep{yu2020mopo, kidambi2020morel, yu2021combo, argenson2021model, sun2023model}. While these approaches have demonstrated strong performance in simple, short-horizon tasks, they struggle with noisy model predictions and value estimates, particularly in long-horizon tasks~\citep{park2024hiql}. This challenge is evident in their poor performances (i.e. near zero) on the D4RL AntMaze tasks~\citep{fu2020d4rl}. 

Typical model-based offline RL methods alleviate the inaccurate value estimation problem (mostly overestimation) by penalizing Q-values estimated from model rollouts with uncertainties in model predictions~\citep{yu2020mopo, kidambi2020morel} or value predictions~\citep{sun2023model, jeong2023conservative}. While these penalization terms prevent a policy from exploiting erroneous value estimates, the policy now does not maximize the true value, but maximizes the value penalized by heuristically estimated uncertainties, which can lead to sub-optimal behaviors. 
This is especially problematic in long-horizon, sparse-reward tasks, where Q-values are similar across nearby states~\citep{park2024hiql}.

Another way to reduce bias in value estimates is using multi-step returns~\citep{sutton1988learning, hessel2018rainbow}. CBOP~\citep{jeong2023conservative} constructs an explicit distribution of multi-step Q-values from thousands of model rollouts and uses this value as a target for training the Q-function. However, CBOP is computationally expensive for estimating a target value and uses multi-step returns solely for Q-learning, not for policy optimization.

To tackle these issues in model-based offline RL, we introduce a simple yet effective model-based offline RL algorithm, Lower Expectile Q-learning (LEQ). As illustrated in \Cref{fig:teaser}, LEQ uses expectile regression with a small $\tau$ for both policy and Q-function training, providing an efficient and elegant way to achieve conservative Q-value estimates. Moreover, we propose to optimize both policy and Q-function using $\lambda$-returns (i.e. TD($\lambda$) targets) of long ($10$-step) model rollouts, allowing the policy to directly learn from low-bias multi-step returns~\citep{schulman2016gae}. 


The experiments on the D4RL AntMaze, MuJoCo Gym~\citep{fu2020d4rl}, NeoRL~\citep{qin2022neorl}, and V-D4RL~\citep{vd4rl} benchmarks show that LEQ improves model-based offline RL across diverse domains.
To the best of our knowledge, LEQ is the first model-based offline RL algorithm capable of outperforming model-free offline RL algorithms on the long-horizon AntMaze tasks~\citep{fu2020d4rl, jiang2023efficient}. Moreover, LEQ matches the top scores across various benchmarks, while prior methods demonstrate superior performances only for a specific domain.

\begin{figure}[t]
    \centering
    \includegraphics[width=0.95\textwidth]{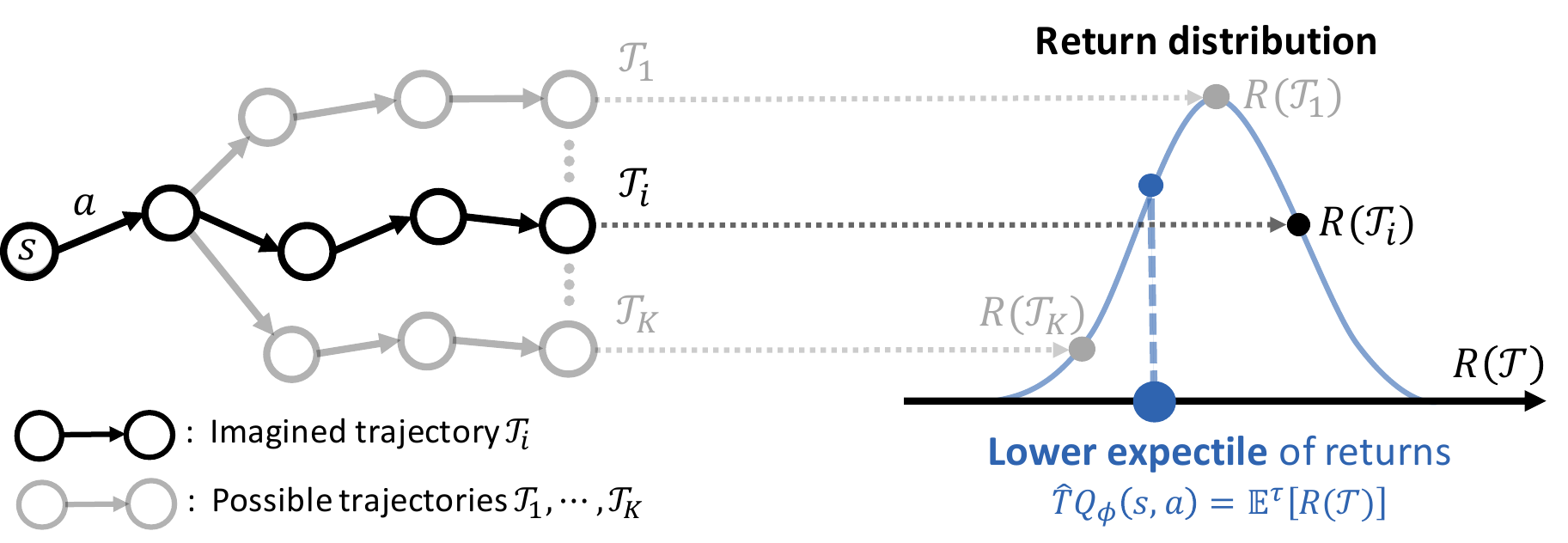}
    \caption{\textbf{Lower Expectile Q-learning (LEQ).} \textbf{(left)} In model-based offline RL, an agent can generate imaginary trajectories using a world model. \textbf{(right)} For conservative Q-evaluation of the policy, LEQ learns the \blue{\textbf{lower expectile}} of the target $Q$-distribution from a few sampled rollouts
 $\mathcal{T}_i$, without estimating the entire Q-distribution with exhaustive rollouts.}
    \label{fig:teaser}
\end{figure}

\section{Related Work}

\textbf{Offline RL}~\citep{levine2020offline} aims to solve an RL problem only with pre-collected datasets, outperforming behavioral cloning policies~\citep{pomerleau1989alvinn}. While it is possible to apply off-policy RL algorithms on fixed datasets, these algorithms suffer from the overestimation of Q-values for actions unseen in the offline dataset~\citep{fujimoto2019off, kumar2019stabilizing, kumar2020conservative} since the overestimated values cannot be corrected through interactions with environments as in online RL.

\textbf{Model-free offline RL} algorithms have addressed this value overestimation problem on out-of-distribution actions by (1)~regularizing a policy to only output actions in the offline data~\citep{peng2019advantage, kostrikov2022iql, fujimoto2021minimalist} or (2)~adopting a conservative value estimation for executing actions different from the dataset~\citep{kumar2020conservative, an2021uncertainty}. Despite their strong performances on the standard offline RL benchmarks, model-free offline RL policies tend to be constrained to the support of the data (i.e. state-action pairs in the offline dataset), which may lead to limited generalization capability. 

\textbf{Model-based offline RL} approaches have tried to overcome this limitation by suggesting a better use of the limited offline data -- learning a world model and generating imaginary data with the learned model that covers out-of-distribution actions. Similar to Dyna-style online model-based RL~\citep{sutton1991dyna, hafner2019dream, hafner2021mastering, hafner2023mastering}, an offline model-based RL policy can be trained on both offline data and model rollouts. But, again, learned models may be inaccurate on states and actions outside the data support, making a policy easily exploit the learned models. 

Recent model-based offline RL algorithms have adopted the conservatism idea from model-free offline RL, penalizing policies incurring (1)~uncertain transition dynamics~\citep{yu2020mopo, kidambi2020morel, yu2021combo} or (2)~uncertain value estimation~\citep{sun2023model, jeong2023conservative}. This conservative use of model-generated data enables model-based offline RL to outperform model-free offline RL in widely used offline RL benchmarks~\citep{sun2023model}. However, uncertainty estimation is difficult and often inaccurate~\citep{yu2021combo}. Instead of relying on such heuristic~\citep{yu2020mopo, kidambi2020morel, sun2023model} or expensive~\citep{jeong2023conservative} uncertainty estimation, we propose to learn a conservative value function via expectile regression with a small $\tau$, which is simple, efficient, yet effective, \rebut{as illustrated in} \Cref{fig:equation}.

\textbf{Expectile regression for offline RL} has been first introduced by IQL~\citep{kostrikov2022iql}, \rebut{which has been extended to model-based offline RL, such as IQL-TD-MPC}~\citep{chitnis2024iql}. IQL uses \textit{``upper expectile''} to approximate the \textit{``max operation''} in $V(s) = \max_a Q(s, a)$ without querying out-of-distribution actions. On the other hand, \rebut{our work \textbf{fundamentally differs from IQL-like approaches}} in that our method uses \textit{``lower expectile''} to get \textit{``conservative return estimates''} from trajectories generated by potentially inaccurate model rollouts.

\begin{figure}[t]        
    \centering
    \resizebox{0.9\linewidth}{!}{
        \begin{tabular}{c c c}
            \large{\textbf{Q-learning}} & & \large{\textbf{\blue{Lower Expectile} Q-learning}}  \\ \\
            $\mathcal{L}_{Q}(\phi) = (Q_{\phi}(\s_t, \ac_t) - Q_t^{\lambda}(\traj))^2$ & \large{$\Longrightarrow$} & 
            $\blue{ \lvert \tau - \1(Q_t^{\lambda}(\traj) > Q_{\phi}(\s_t, \ac_t)) \rvert} \cdot \mathcal{L}_Q(\phi)$  \\
            \\
            $\nabla_{\theta}\mathcal{L}_{\pi}(\theta) = -\nabla_{\theta}\pi_{\theta}(\s_t) \cdot \nabla_{\ac_t} Q_t^{\lambda}(\traj)$ & \large{$\Longrightarrow$} & 
            $\blue{\lvert \tau - \1(Q_t^{\lambda}(\traj) > Q_{\phi}(\s_t, \ac_t)) \rvert} \cdot \nabla_{\theta}\mathcal{L}_{\pi}(\theta)$ \\
        \end{tabular}
    }
    \caption{\textbf{Comparison of standard Q-learning and Lower Expectile Q-learning (LEQ).} LEQ generalizes standard Q-learning (with $\lambda$-returns $Q_t^{\lambda}(\traj)$) by multiplying a simple asymmetric weight \blue{``$\lvert \tau - \1(Q_t^{\lambda}(\traj) > Q_{\phi}(s_t, a_t)) \rvert$''} to the Q-learning objectives. $\traj = (\s_0, \ac_0, r_0, \s_1, \ac_1, r_1, \cdots, \s_T)$ is a model-generated trajectory and $\tau \leq 0.5$ is the expectile hyperparameter controlling the degree of conservatism. When $\tau = 0.5$, LEQ reduces to standard Q-learning.}
    \label{fig:equation}
    \vspace{-1em}
\end{figure}

\section{Preliminaries}
\label{sec:preliminaries}

\paragraph{Problem setup.} 
We formulate our problem as a Markov Decision Process (MDP) defined as a tuple, $\mathcal{M}=(\mathcal{S}, \mathcal{A}, r, p, \rho, \gamma)$~\citep{sutton2018reinforcement}. $\mathcal{S}$ and $\mathcal{A}$ denote the state and action spaces, respectively. $r : \mathcal{S} \times \mathcal{A} \rightarrow \mathbb{R}$ denotes the reward function. $p : \mathcal{S} \times \mathcal{A} \rightarrow \Delta(\mathcal{S})$\footnote{$\Delta(\mathcal{X})$ denotes the set of probability distributions over $\mathcal{X}$.} denotes the transition dynamics. $\rho(\s_0) \in \Delta(\mathcal{S})$ denotes the initial state distribution and $\gamma$ is a discounting factor. The goal of RL is to find a policy, $\pi : \mathcal{S} \rightarrow \Delta(\mathcal{A})$, that maximizes the expected return, $\mathbb{E}_{\traj \sim p(\cdot \mid \pi, \s_0 \sim \rho)} \left[ \sum_{t=0}^{T-1} \gamma^t r(\s_t, \ac_t) \right]$, where $\traj$ is a sequence of transitions with a finite horizon $T$, $\traj=(\s_0, \ac_0, r_0, \s_1, \ac_1, r_1, ..., \s_T)$, following $\pi(\ac_t \mid \s_t)$ and $p(\s_{t+1} \mid \s_t, \ac_t)$ starting from $\s_0 \sim \rho(\cdot)$.

In this paper, we consider the offline RL setup~\citep{levine2020offline}, where a policy $\pi$ is trained with a fixed given offline dataset, $\mathcal{D}_\text{env}=\{ \traj_1, \traj_2, ..., \traj_N \}$, without any additional online interactions.

\paragraph{Model-based offline RL.}
As an offline RL policy is trained from a fixed dataset, one of the major challenges in offline RL is the limited data support; thus, lack of generalization to out-of-distribution states and actions. Model-based offline RL~\citep{kidambi2020morel, yu2020mopo, yu2021combo, rigter2022rambo, sun2023model, jeong2023conservative} tackles this problem by augmenting the training data with imaginary training data (i.e. model rollouts) generated from the learned transition dynamics and reward model, $p_\psi(\s_{t+1}, r_t \mid \s_t, \ac_t)$. 

The typical process of model-based offline RL is as follows: (1)~pretrain a model (or an ensemble of models) and an initial policy from the offline data, (2)~generate short imaginary rollouts $\{\traj\}$ using the pretrained model and add them to the training dataset $\mathcal{D}_\text{model} \gets \mathcal{D}_\text{model} \cup \{\traj\}$, (3)~perform an offline RL algorithm on the augmented dataset $\mathcal{D}_\text{model} \cup \mathcal{D}_\text{env}$, and repeat (2) and (3).

\paragraph{Expectile regression.}
Expectile is a generalization of the expectation of a distribution $X$. While the expectation of $X$, $\mathbb{E}[X]$, can be viewed as a minimizer of the least-square objective, \rebut{$\mathbb{E}_{x \sim X} [L_2(y-x)] = \mathbb{E}_{x \sim X} [\frac{1}{2}(y-x)^2]$}, $\tau$-expectile of $X$, $\mathbb{E}^{\tau}[X]$, can be defined as a minimizer of the asymmetric least-square objective \rebut{$\mathbb{E}_{x \sim X} [L_2^{\tau}(y-x)]$, where $L_2^{\tau}(\cdot)$ is defined as}:
\begin{equation} 
    \label{eq:expectile_loss}
    L_2^{\tau}(u) = \lvert \tau - \1(u > 0) \rvert \cdot u^2,
\end{equation}
where \rebut{$\lvert \tau - \mathbbm{1}(u > 0) \rvert$} is an asymmetric weighting of least-squared objective with $0 \leq \tau \leq 1$. 

We refer to a $\tau$-expectile with $\tau < 0.5$ as a \textit{lower expectile} of $X$. When $\tau < 0.5$, the objective assigns a high weight $1-\tau$ for smaller $x$ and a low weight $\tau$ for bigger $x$. Thus, minimizing the objective with $\tau < 0.5$ leads to a \textit{conservative} statistical estimate compared to the expectation.

\section{Approach}
\label{sec:approach}

The primary challenge of model-based offline RL is inherent errors in a world model and critic outside the support of offline data. Conservative value estimation can effectively handle such (falsely optimistic) errors. 
In this paper, we introduce Lower Expectile Q-learning (LEQ), an efficient model-based offline RL method that achieves conservative value estimation via expectile regression of Q-values with lower expectiles when learning from model-generated data (\Cref{sec:approach_critic}). Additionally, we address the noisy value estimation problem~\citep{park2024hiql} using $\lambda$-returns on $10$-step imaginary rollouts (\Cref{sec:lambda_return}). Finally, we train a deterministic policy conservatively by maximizing the lower expectile of $\lambda$-returns (\Cref{sec:approach_actor}). The overview of LEQ is described in \Cref{alg:training}.

\begin{algorithm}[t]
\caption{LEQ: Lower Expectile Q-learning with $\lambda$-returns}
\label{alg:training}
\begin{algorithmic}[1]
    \REQUIRE Offline dataset $\mathcal{D}_\text{env}$, expectile $\tau \leq 0.5$, imagination length $H$, dataset expansion length $R$.
    \vspace{0.2em}
    \STATE Initialize world models $\{p_{\psi_1}, \cdots, p_{\psi_M}\}$, policy $\pi_{\theta}$, and Q-function $Q_{\phi}$
    \STATE Pretrain $\{p_{\psi_1}, \cdots, p_{\psi_M}\}$ on $\mathcal{D}_\text{env}$ \COMMENT{$\mathcal{L}_{\text{wm}}(\psi) = -\mathbb{E}_{(\s, \ac, r, \s') \in \mathcal{D}_{\text{env}}} \log p_{\psi}(\s', r \mid\s,\ac)$}
    \STATE Pretrain $\pi_{\theta}$ and $Q_{\phi}$ on $\mathcal{D}_\text{env}$ \COMMENT{using BC for $\pi_\theta$ and FQE~\citep{le2019batch} for $Q_\phi$}
    \STATE $\mathcal{D}_\text{model} \gets \emptyset$
    \WHILE{not converged}
        \STATE \blue{\emph{// Expand dataset using model rollouts}}
        \STATE $\s_0 \sim \mathcal{D}_\text{env}$ \COMMENT{start dataset expansion from \textbf{any} state in $\mathcal{D}_{\text{env}}$}        
        \FOR{$t = 0, \ldots, R - 1$}
            \STATE $\mathcal{D}_{\text{model}} \leftarrow \mathcal{D}_{\text{model}} \cup \{ \s_{t} \}$
            \STATE $\ac_{t} = \pi_{\theta}(\s_{t})$
            \STATE $\s_{t+1}, r_{t} \sim p_{\psi}(\cdot \mid \s_{t}, \ac_{t})$, where $p_{\psi} \sim \{p_{\psi_1}, \cdots, p_{\psi_M}\}$ \COMMENT{sample $p_{\psi}$ every step}
        \ENDFOR
        \vspace{0.2em}
        \STATE \blue{\emph{// Generate imaginary data, $\traj = \{ (\s_0, \ac_0, r_0, \cdots, \s_{H-1}, \ac_{H-1}, r_{H-1}, \s_H)_i \}$}}
        \STATE $\s_0 \sim \mathcal{D}_{\text{model}}$ \COMMENT{start imaginary rollout from \textbf{any} state in $\mathcal{D}_\text{model}$}
        \FOR{$t = 0, \ldots, H - 1$}
            \STATE $\ac_{t} = \pi_{\theta}(\s_{t})$
            \STATE $\s_{t+1}, r_{t} \sim p_{\psi}(\cdot \mid \s_{t}, \ac_{t})$, where $p_{\psi} \sim \{p_{\psi_1}, \cdots, p_{\psi_M}\}$ \COMMENT{sample $p_{\psi}$ every step}
        \ENDFOR
        \vspace{0.2em}
        \STATE \blue{\emph{// Update critic using both \textbf{model-generated data} and \textbf{offline data}}}
        \STATE Update critic $Q_\phi$ to minimize $\mathcal{L}^\lambda_Q(\phi)$ in \cref{eq:critic_total} using $\traj$ and $\{\s, \ac, r, \s'\} \sim \mathcal{D}_{\text{env}}$
        \vspace{0.2em}
        \STATE \blue{\emph{// Update actor using only \textbf{model-generated data}}}
        \STATE Update actor $\pi_\theta$ to minimize $\hat{\mathcal{L}}^\lambda_\pi(\theta)$ in \cref{eq:lambda_return_expectile_policy_loss_2} using $\traj$
    \ENDWHILE
\end{algorithmic}
\end{algorithm}

\subsection{Lower expectile Q-learning}
\label{sec:approach_critic}

Most offline RL algorithms primarily focus on learning a conservative value function for out-of-distribution actions. In this paper, we propose Lower Expectile Q-learning (LEQ), which learns a conservative Q-function via expectile regression with small $\tau$, avoiding unreliable uncertainty estimation and exhaustive Q-value estimation.

As illustrated in \Cref{fig:teaser}, the target value for $Q_\phi(\s, \ac)$, where $\ac \leftarrow \pi_\theta(\s)$, can be estimated by rolling out an ensemble of world models and averaging $r + \gamma Q_\phi(\s', \ac')$ over all possible $\s'$:
\begin{equation} 
    \label{eq:q_target_model}
    \hat{y}_\text{model} = \mathbb{E}_{\psi \sim \{ \psi_1, ..., \psi_M \}}\mathbb{E}_{(\s',r) \sim p_\psi(\cdot \mid \s, \ac)}\left[r + \gamma Q_{\phi}(\s', \pi_{\theta}(\s'))\right].
\end{equation}
This target value has three error sources: the predicted future state and reward $\s', r \sim p_\psi(\cdot \mid \s, \ac)$ and future Q-value $Q_{\phi}(\s', \pi_{\theta}(\s'))$. Thus, the target value from model-generated data, $\hat{y}_\text{model}$, is more prone to overestimation than the original target Q-value, $\hat{y}_\text{env}$, computed from $(\s, \ac, r, \s') \sim D_\text{env}$:
\begin{equation} 
    \label{eq:q_target_env}
    \hat{y}_\text{env} = r + \gamma Q_{\phi}(\s', \pi_{\theta}(\s')).
\end{equation}

To mitigate the overestimation of $\hat{y}_\text{model}$ from inaccurate $H$-step model rollouts, we propose to use lower expectile regression on target Q-value estimates with small $\tau$. 
As illustrated in \Cref{fig:equation}, expectile regression with small $\tau$ learns a Q-function predicting Q-values lower than the expectation, i.e., a \textit{conservative estimate} of a target Q-value.
Another advantage of using lower expectile regression is that we do not have to exhaustively evaluate Q-values to get $\tau$-expectiles as \citet{jeong2023conservative}; instead, we can learn a conservative Q-function with sampling:
\begin{equation} 
    \label{eq:1-step_model_critic_loss}
    L_{Q, \text{model}}(\phi) = \mathbb{E}_{\s_0 \in \mathcal{D}_{\text{model}}, \traj \sim p_\psi, \pi_\theta} \left[ \frac{1}{H} \sum_{t=0}^{H} L_2^{\tau}(Q_{\phi}(\s_t,\pi_{\theta}(\s_t)) - \hat{y}_\text{model})\right].
\end{equation}

Additionally, we train the Q-function with the transitions in the dataset $\mathcal{D}_\text{env}$, which do not have the risk of overestimation caused by the inaccurate model, using the standard Bellman update:
\begin{equation} 
    \label{eq:1-step_data_critic_loss}
    \mathcal{L}_{Q, \text{env}}(\phi) = \mathbb{E}_{(\s,\ac,r,\s') \in \mathcal{D}_{\text{env}}}\left[\frac{1}{2}(Q_{\phi}(\s,\ac) - \hat{y}_\text{env})^2\right].
\end{equation}

To stabilize training of the Q-function, we adopt EMA regularization~\citep{hafner2023mastering}, which prevents drastic change of Q-values by regularizing the difference between the predictions from $Q_{\phi}$ and ones from its exponential moving average $Q_{\bar{\phi}}$:
\begin{equation} 
    \label{eq:critic_ema}
    \mathcal{L}_{Q, \text{EMA}}(\phi) = \mathbb{E}_{(\s,\ac) \in \mathcal{D}_{\text{env}}} \left[(Q_{\phi}(\s,\ac) - Q_{\bar{\phi}}(\s, \ac))^2 \right].
\end{equation}

Finally, by combining the three aforementioned losses, we define the critic loss as follows:
\begin{equation} 
    \label{eq:critic_total}
    \mathcal{L}_{Q}(\phi) = \beta \mathcal{L}_{Q, \text{model}}(\phi) + (1-\beta) \mathcal{L}_{Q, \text{env}}(\phi) + \omega_{\text{EMA}} \mathcal{L}_{Q, \text{EMA}}(\phi).
\end{equation}

\subsection{Lower expectile Q-learning with $\lambda$-return}
\label{sec:lambda_return}

To further improve LEQ, we use $\lambda$-return instead of $1$-step return for Q-learning. $\lambda$-return allows a Q-function and policy to learn from low-bias multi-step returns~\citep{schulman2016gae}. 
Using $N$-step returns $G_{t:t+N}(\traj) = \sum_{i=0}^{N-1} \gamma^i r_{t+i}+ \gamma^N Q_{\phi}(\s_{t+N}, \ac_{t+N})$, we define $\lambda$-return of an $H$-step trajectory $\traj$ in timestep $t$, $Q_{t}^{\lambda}(\traj)$ as:\footnote{Our $\lambda$-return slightly differs from \citep{sutton1988learning} that puts a high weight to the last $N$-step return, $G_{t:H}(\traj)$.}
\begin{equation}
    \label{eq:lambda_return}
    Q_{t}^{\lambda}(\traj) = \frac{1-\lambda}{1-\lambda^{H-t-1}} \sum_{i=1}^{H-t} \lambda^{i-1} G_{t:t+i}(\traj).
\end{equation}
Then, we can rewrite the Q-learning loss in \Cref{eq:1-step_model_critic_loss} with $\lambda$-return targets:
\begin{equation} 
    \label{eq:lambda_model_critic_loss}
    \mathcal{L}^\lambda_{Q, \text{model}}(\phi) = \mathbb{E}_{\s_0 \in \mathcal{D}_{\text{model}}, \traj \sim p_\psi, \pi_\theta} \left[\sum_{t=0}^{H-1} L_2^{\tau}(Q_{\phi}(\s_t,\pi_{\theta}(\s_t)) - Q^{\lambda}_{t}(\traj))\right].
\end{equation}

\subsection{Lower expectile policy learning with $\lambda$-return}
\label{sec:approach_actor}

For policy optimization, we use a deterministic policy $\ac = \pi_\theta(\s)$ and update the policy using the deterministic policy gradients similar to DDPG~\citep{lillicrap2016continuous}.\footnote{LEQ also works with a stochastic policy; but, a deterministic policy is sufficient for our experiments.}
To provide more accurate learning targets for the policy, instead of maximizing the immediate Q-value, $Q_\phi(\s, \ac)$, we maximize the lower expectile of $\lambda$-return, analogous to our conservative critic learning in \Cref{sec:lambda_return}: 
\begin{equation} 
    \label{eq:lambda_return_expectile_policy_loss}
    \mathcal{L}_\pi^{\lambda}(\theta) = -\mathbb{E}_{\s_0 \in \mathcal{D}_{\text{model}}} \left[ \sum_{t=0}^{H} \mathbb{E}^{\tau}_{\traj \sim p_{\psi}, \pi_{\theta}} \left[ Q_t^{\lambda}(\traj) \right] \right].
\end{equation}
However, because of the expectile term, $\mathbb{E}^{\tau}[Q_t^{\lambda}]$, we cannot directly compute the gradient of $\mathcal{L}_\pi^{\lambda}(\theta)$.
\rebut{To change the expectile to expectation, we use the relationship $\mathbb{E}^{\tau}[Q_t^{\lambda}] = \frac{\mathbb{E}\left[ \lvert \tau - \1(\mathbb{E}^{\tau}[Q_t^{\lambda}] > Q_t^{\lambda}) \rvert \cdot Q_t^{\lambda}\right]}{\mathbb{E}\left[ \lvert \tau - \1(\mathbb{E}^{\tau}[Q_t^{\lambda}] > Q_t^{\lambda}) \rvert \right]}$, and optimize the unnormalized version, $\mathbb{E}\left[ \lvert \tau - \1(\mathbb{E}^{\tau}[Q_t^{\lambda}] > Q_t^{\lambda}) \rvert \cdot Q_t^{\lambda}\right]$.} By approximating $\mathbb{E}^{\tau}[Q_t^{\lambda}]$ with the learned Q-estimator $Q_{\phi}(\s_t, \ac_t)$, we derive a differentiable surrogate loss of \Cref{eq:lambda_return_expectile_policy_loss}:
\begin{equation} 
   \label{eq:lambda_return_expectile_policy_loss_2}
   \hat{\mathcal{L}}_\pi^{\lambda}(\theta) = -\mathbb{E}_{\s_0 \in \mathcal{D}_{\text{model}}, \traj \sim p_{\psi}, \pi_{\theta}} \left[ \sum_{t=0}^{H} \lvert \tau - \1\left(Q_{\phi}(\s_t,\ac_t)>Q^{\lambda}_{t} (\traj)\right) \rvert \cdot Q_t^\lambda (\traj)\right].
\end{equation}
Intuitively, this surrogate loss sets a higher weight ($1-\tau$) on a conservative $\lambda$-return estimates (i.e.  $Q_{\phi}(\s_t,\ac_t) > Q^{\lambda}_{t} (\traj)$), encouraging a policy to optimize for this conservative $\lambda$-return. On the other hand, an optimistic $\lambda$-return estimates (i.e. $Q_{\phi}(\s_t,\ac_t) < Q^{\lambda}_{t} (\traj)$) has a less impact to the policy with a smaller weight ($\tau$). 
We provide a proof in \Cref{sec:proof} showing that the proposed surrogate loss provides a better approximation of \Cref{eq:lambda_return_expectile_policy_loss} than the immediate Q-value, $Q_\phi(\s_t, \ac_t)$.

\subsection{Expanding dataset with model rollouts}
\label{sec:dataset_expansion}


One of the problems of offline RL is that data distribution is limited to the offline dataset $\mathcal{D}_{\text{env}}$. To tackle this problem, we expand the dataset using simulated trajectories, which we refer to as $\mathcal{D}_{\text{model}}$~\citep{yu2021combo}. To diversify the simulated trajectories in $\mathcal{D}_{\text{model}}$, we use a \textit{noisy exploration policy}, which adds a noise $\epsilon \sim N(0, \sigma_{\text{exp}}^2)$, to the current policy and generate a trajectory of length $R$.

\section{Experiments}
\label{sec:experiments}

In this paper, we propose a novel model-based offline RL method with simple, efficient, yet accurate conservative value estimation. Through our experiments, we aim to answer the following questions: (1)~Can \textbf{LEQ} solve diverse domains of problems, including both dense-reward tasks and long-horizon sparse-reward tasks? (2) Can \textbf{LEQ} be applied to pixel-based environments? (3)~How individual components of \textbf{LEQ} affect the performance?

\begin{figure}[b]
   \begin{minipage}{0.49\textwidth}
     \centering
     \begin{subfigure}[t]{0.265\textwidth}
        \includegraphics[width=\textwidth]{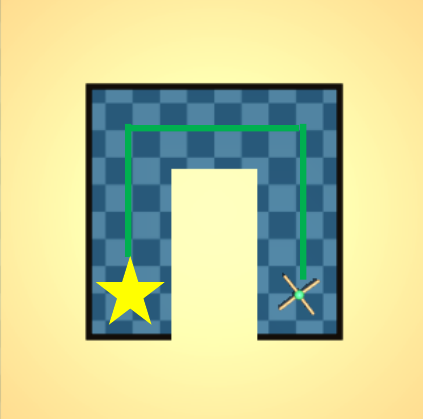}
        \caption{Umaze}    
    \end{subfigure}
    \begin{subfigure}[t]{0.265\textwidth}
        \includegraphics[width=\textwidth]{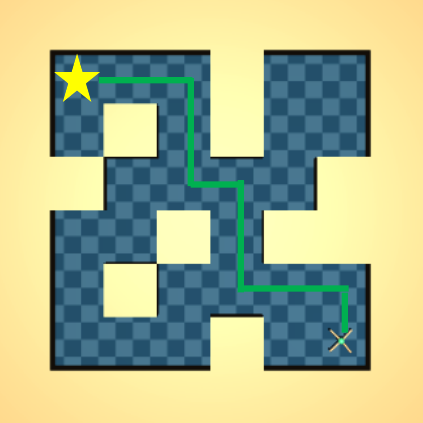}
        \caption{Medium}
    \end{subfigure}
    \begin{subfigure}[t]{0.2\textwidth}
        \includegraphics[width=\textwidth]{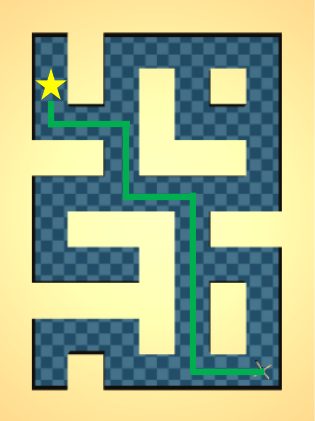}
        \caption{Large}    
    \end{subfigure}
    \begin{subfigure}[t]{0.2\textwidth}
        \includegraphics[width=\textwidth]{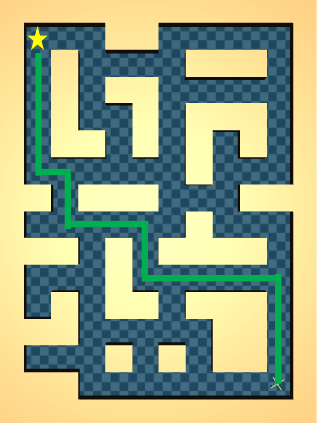}
        \caption{Ultra}
    \end{subfigure}
    \vspace{-0.5em}
    \caption{AntMaze tasks.}\label{fig:Antmaze}
   \end{minipage}\hfill
   \begin{minipage}{0.485\textwidth}
    \centering
    \begin{subfigure}[t]{0.318\textwidth}
        \includegraphics[width=\textwidth]{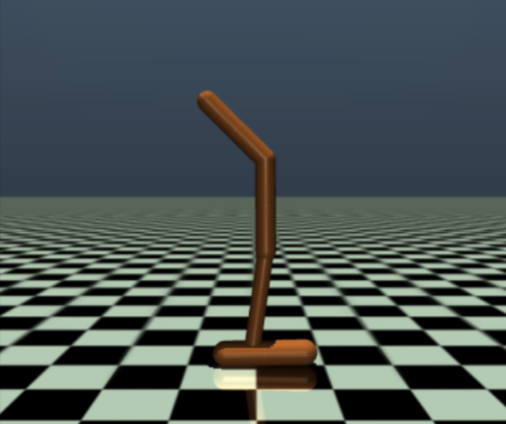}
        \caption{Hopper}    
    \end{subfigure}
    \begin{subfigure}[t]{0.32\textwidth}
        \includegraphics[width=\textwidth]{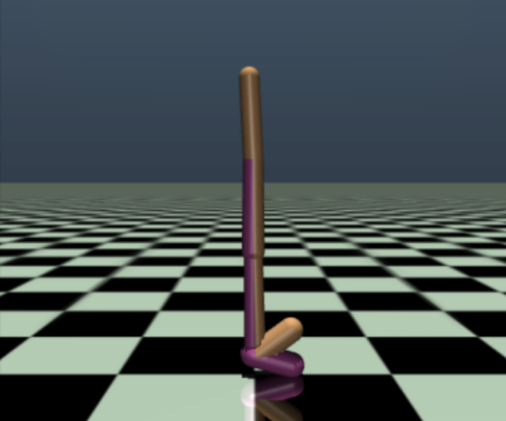}
        \caption{Walker2d}
    \end{subfigure}
    \begin{subfigure}[t]{0.322\textwidth}
        \includegraphics[width=\textwidth]{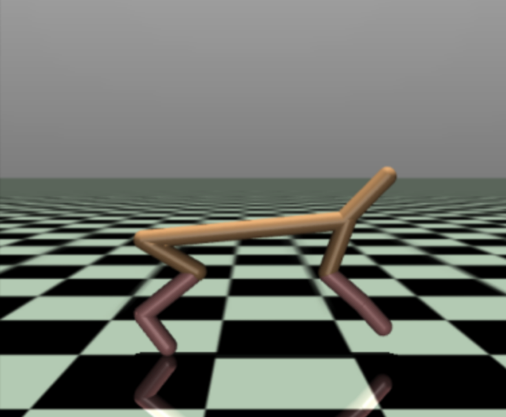}
        \caption{HalfCheetah}
    \end{subfigure}
    \vspace{-0.5em}
    \caption{Locomotion tasks.}\label{fig:Locomotion}
   \end{minipage}
\end{figure}

\subsection{Tasks}

To verify the strength of our low-bias model-based conservative value estimation in diverse domains, we test \textbf{LEQ} on four benchmarks: D4RL AntMaze, D4RL MuJoCo Gym~\citep{fu2020d4rl}, NeoRL~\citep{qin2022neorl}, and V-D4RL~\citep{vd4rl}.
We first test on long-horizon AntMaze tasks: \texttt{umaze}, \texttt{medium}, \texttt{large} from D4RL, and \texttt{ultra} from \citet{jiang2023efficient}, as shown in \Cref{fig:Antmaze}. We also evaluate \textbf{LEQ} on locomotion tasks (\Cref{fig:Locomotion}): state-based tasks from D4RL, NeoRL and pixel-based tasks from V-D4RL. Please refer to \Cref{sec:training_details} for more experimental details.

\subsection{Compared offline RL algorithms}

\paragraph{Model-free offline RL.} We consider behavioral cloning (\textbf{BC})~\citep{pomerleau1989alvinn}; \textbf{TD3+BC}~\citep{fujimoto2021minimalist}, which combines BC loss to TD3; \textbf{CQL}~\citep{kumar2020conservative}, which penalizes out-of-distribution actions; and \textbf{IQL}~\citep{kostrikov2022iql}, which uses upper-expectile regression to estimate the value function. For locomotion tasks, we also compare with \textbf{EDAC}~\citep{an2021uncertainty}, which penalizes Q-values based on its uncertainty. For pixel-based tasks, we compare with \textbf{DrQ+BC}~\citep{vd4rl}, which combines BC loss to DrQ-v2~\citep{drqv2}; \textbf{ACRO}, which learns representations with a multi-step inverse dynamics model.

\paragraph{Model-based offline RL.} We consider \textbf{MOPO}~\citep{yu2020mopo} and \textbf{MOBILE}~\citep{sun2023model}, which penalize Q-values according to the transition uncertainty and the Bellman uncertainty of a world model, respectively; \textbf{COMBO}~\citep{yu2021combo}, which combines CQL with MBPO; \textbf{RAMBO}~\citep{rigter2022rambo}, which trains an adversarial world model against the policy; and \textbf{CBOP}~\citep{jeong2023conservative}, which utilizes multi-step returns for critic updates; \textbf{IQL-TD-MPC}~\citep{chitnis2024iql}, which extends TD-MPC~\citep{hansen2022temporal} to offline setting with IQL. For pixel-based environments, we consider \textbf{OfflineDV2}~\citep{vd4rl}, which penalizes Q-values according to the dynamics errors, and \textbf{ROSMO}~\citep{rosmo}, which uses one-step model rollouts for policy improvement. \rebut{LEQ follows MOBILE for most implementation details but implemented in JAX}~\citep{jax}, \rebut{which makes it $6$ times faster than the PyTorch versions of MOBILE and CBOP}. Please refer to \Cref{tab:method_comparisons} for detailed comparison. 

\paragraph{Sequence modeling for offline RL.} We consider \textbf{TT}~\citep{janner2021offline}, which trains a Transformer model~\citep{transformer} to predict offline trajectories and applies beam search to find the best trajectory; and \textbf{TAP}~\citep{jiang2023efficient}, which improves TT by quantizing the action space with VQ-VAE~\citep{van2017neural}.

\begin{table}[t]
    \caption{\textbf{AntMaze results.} Each number represents the average success rate on $100$ trials over different seeds. The results for \textbf{LEQ}, \textbf{MOBILE}, and \textbf{CBOP} are averaged over $5$ seeds. The results for other methods are reported following their respective papers.}
    \label{tab:d4rl_antmaze}
    \resizebox{\linewidth}{!}{
        \begin{tabular}{lccccccccccccc}
        \toprule
        & \multicolumn{2}{c}{\textbf{Model-free}} & 
        \multicolumn{2}{c}{\textbf{Seq. modeling}} & \multicolumn{6}{c}{\textbf{Model-based}} \\
        \cmidrule(r){2-3}
        \cmidrule(r){4-5}
        \cmidrule(r){6-11}
        \textbf{Dataset}  & \textbf{CQL} & \textbf{IQL}  & 
        \textbf{TT} & \textbf{TAP} & \textbf{COMBO} & \textbf{RAMBO} & \textbf{MOBILE}$^\dagger$ & \textbf{CBOP}$^\dagger$ & \textbf{IQL-TD-MPC} & \textbf{LEQ} (\textbf{ours}) \\ 
        \midrule
        \texttt{umaze}             & $74.0$ & $87.5$  &$\mb{100.0}$ & $81.5$ & $80.3$ & $25.0$ & $0.0$ {\tiny $\pm 0.0$} & $0.0$ {\tiny $\pm 0.0$} &  $52.0$ & $94.4$ {\tiny $\pm 6.3$}    \\ 
        \texttt{umaze-diverse}     & $\mb{84.0}$ & $62.2$ & $21.5$ & $68.5$  & $57.3$ & $0.0 $ & $0.0$ {\tiny $\pm 0.0$} & $0.0$ {\tiny $\pm 0.0$} & $72.6$ &  $71.0$ {\tiny $\pm 12.3$}    \\ 
        \texttt{medium-play}       & $61.2$ & $71.2$  & $\mb{93.3}$ & $78.0$  & $0.0 $ & $16.4$ & $0.0$ {\tiny $\pm 0.0$} & $0.0$ {\tiny $\pm 0.0$} & $\mb{88.8}$ &  $58.8$ {\tiny $\pm 33.0$}    \\ 
        \texttt{medium-diverse}    & $53.7$ & $\mb{70.0}$  &$\mb{100.0}$ & $85.0$   & $0.0 $ & $23.2$ & $0.0$ {\tiny $\pm 0.0$} & $0.0$ {\tiny $\pm 0.0$}  & $40.3$ &  $46.2$ {\tiny $\pm 23.2$}    \\
        \texttt{large-play}        & $15.8$ & $39.6$ &$66.7$ & $\mb{74.0}$  & $0.0 $ & $0.0 $ & $0.0$ {\tiny $\pm 0.0$} & $0.0$ {\tiny $\pm 0.0$} & $66.6$ &  $58.6$ {\tiny $\pm 9.1$}    \\ 
        \texttt{large-diverse}     & $14.9$ & $47.5$  & $60.0$ & $\mb{82.0}$ & $0.0 $ & $2.4 $ & $0.0$ {\tiny $\pm 0.0$} & $0.0$ {\tiny $\pm 0.0$} &  $4.0$& $60.2$ {\tiny $\pm 18.3$}    \\
        \texttt{ultra-play}        & $-   $ & $8.3$   & $20.0$ & $22.0$ & $-   $ & $-   $ & $0.0$ {\tiny $\pm 0.0$} & $0.0$ {\tiny $\pm 0.0$} & $20.6$ & $\mb{25.8}$ {\tiny $\mb{\pm 18.2}$}    \\ 
        \texttt{ultra-diverse}     & $-$    & $15.6$ & $33.3$ & $26.0$ & $- $   & $-   $ & $0.0$ {\tiny $\pm 0.0$} & $0.0$ {\tiny $\pm 0.0$} & $3.6$ &  $\mb{55.8}$ {\tiny $\mb{\pm 18.3}$}    \\
        \midrule
        \textbf{Total w/o \texttt{ultra}}    & $303.6$ & $354.1$ & $441.5$ & $\mb{469.0}$ & $137.6$ & $67.0$ & $0.0$ & $0.0$ & $324.3$ & $388.8$   \\
        \textbf{Total}    & $-$ & $378.0$ & $\mb{494.8}$ & $\mb{517.0}$ & $-$ & $-$ & $0.0$ & $0.0$ & $348.5$ & $470.4$   \\
        \bottomrule
        \end{tabular}
    }
    \begin{flushleft}\scriptsize
        \textsuperscript{$\dagger$}We use the official implementation of \textbf{MOBILE} and \textbf{CBOP}. 
    \end{flushleft}
\end{table}

\begin{table}[t]
    \centering
    \caption{\textbf{NeoRL results.} \textbf{LEQ} and \textbf{IQL} results are averaged over $5$ seeds. The results for prior works are reported following \citet{sun2023model} and \citet{qin2022neorl}. \textbf{MOPO$^*$} is an improved version of \textbf{MOPO} presented in \citet{sun2023model}. We highlight the results better than $95\%$ of the best score. }
    \label{tab:neorl}
    \resizebox{1.0\textwidth}{!}{
        \begin{tabular}{lcccccccc}
        \toprule
        & \multicolumn{5}{c}{\textbf{Model-free}} & \multicolumn{3}{c}{\textbf{Model-based}} \\
        \cmidrule(r){2-6}
        \cmidrule(r){7-9}        
        \textbf{Dataset}         & \textbf{BC}   & \textbf{TD3+BC} & \textbf{CQL} & \textbf{EDAC} & \textbf{IQL}  & \textbf{MOPO$^*$} & \textbf{MOBILE} & \textbf{LEQ} (\textbf{ours}) \\
        \midrule
        \texttt{Hopper-L}          & $15.1$       & $15.8$   & $16.0$         & $18.3$       & $16.7$        & $6.2  $     & $17.4$          &  $\mb{24.2 }$ {\tiny $\pm 2.3 $}      \\ 
        \texttt{Hopper-M}          & $51.3$       & $70.3$   & $64.5$         & $44.9$       & $28.4$        & $1.0  $     & $51.1$          &  $\mb{104.3}$ {\tiny $\pm 5.2 $}     \\ 
        \texttt{Hopper-H}          & $43.1$       & $75.3$   & $76.6$         & $52.5$       & $22.3$        & $11.5 $     & $87.8$          &  $\mb{95.5 }$ {\tiny $\pm 13.9$}     \\
        \midrule
        \texttt{Walker2d-L}        & $28.5$       & $43.0$   & $44.7$         & $40.2$       & $30.7$        & $11.6 $     & $37.6$          &  $\mb{65.1}$ {\tiny $\pm 2.3$}      \\ 
        \texttt{Walker2d-M}        & $48.7$       & $58.5$   & $57.3$         & $57.6$       & $51.8$        & $39.9 $     & $\mb{62.2}$     &  $45.2$ {\tiny $\pm 19.4$}          \\ 
        \texttt{Walker2d-H}        & $\mb{72.6}$  & $69.6$   & $\mb{75.3}$    & $\mb{75.5}$  & $\mb{76.3}$   & $18.0 $     & $\mb{74.9}$     &  $\mb{73.7}$ {\tiny $\pm 1.1$}      \\
        \midrule
        \texttt{HalfCheetah-L}    & $29.1$       & $30.0$   & $38.2$         & $31.3$       & $30.7$        & $40.1 $     & $\mb{54.7}$     &  $33.4$ {\tiny $\pm 1.6$}           \\ 
        \texttt{HalfCheetah-M}     & $49.0$       & $52.3$   & $54.6$         & $54.9$       & $51.8$        & $62.3 $     & $\mb{77.8}$     &  $59.2$ {\tiny $\pm 3.9$}           \\ 
        \texttt{HalfCheetah-H}     & $71.4$       & $75.3$   & $77.4$         & $\mb{81.4}$  & $76.3$        & $65.9 $     & $\mb{83.0}$     &  $71.8$ {\tiny $\pm 8.0$}           \\
        \midrule
        \textbf{Total}    & $408.8$      & $490.1$  & $504.6$        & $456.6$      & $385.0$       & $256.5$     & $\mb{546.5}$         &  $\mb{572.4}$             \\
        \bottomrule
        \end{tabular}
    }
\end{table}

\subsection{Results on long-horizon AntMaze tasks}

As shown in \Cref{tab:d4rl_antmaze}, \textbf{LEQ} significantly outperforms the prior \textit{model-based} approaches. For example, \textbf{LEQ} achieves $60.2$ and $55.8$ for \texttt{large-diverse} and \texttt{ultra-diverse}, while the second best method, \textbf{IQL-TD-MPC}~\citep{chitnis2024iql}, scores only $4.0$ and $3.6$, respectively. We believe these performance gains come from our conservative value estimation, which works more stable than the uncertainty-based penalization of prior works. 
Moreover, \textbf{LEQ} even significantly outperforms the \textit{model-free} approaches in \texttt{umaze}, \texttt{large}, and \texttt{ultra} mazes, and outperforms \textit{sequence modeling} methods, \rebut{\textbf{TT} and \textbf{TAP}, which serve as strong baselines for AntMaze tasks}, in the most challenging \texttt{ultra} mazes, \rebut{showing the advantage of utilizing low-bias multi-step return on long-horizon tasks}.

\begin{wrapfigure}{r}{0.36\textwidth}
    \vspace{-1em}
    \centering
    \begin{subfigure}[t]{0.17\textwidth}
        \includegraphics[width=\textwidth]{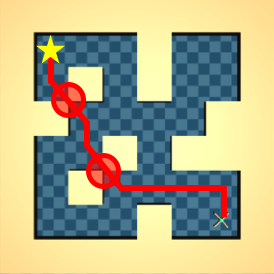} 
    \end{subfigure}
    \hfill
    \begin{subfigure}[t]{0.17\textwidth}
        \includegraphics[width=\textwidth]{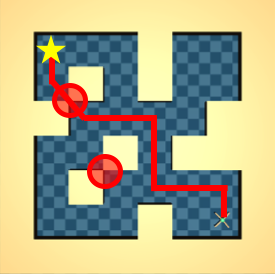}
    \end{subfigure}
    \vspace{-0.35em}
   \caption{\textbf{Failure in medium mazes.} The agent plans impossible trajectories on certain states (red circles).}
   \label{fig:Antmaze_failure_main}
   \vspace{-1em}
\end{wrapfigure}

Despite its superior performance, \textbf{LEQ} shows high variance in the performance on \texttt{antmaze-medium}. We found that the medium mazes include states separated by walls that are very close to each other (denoted as red circles in \Cref{fig:Antmaze_failure_main}), such that all of the learned world models falsely believe the agent can pass through the walls. 
This incorrect prediction makes the agent to plan faster, but impossible trajectories as shown in \Cref{fig:Antmaze_failure_main}.
We believe that this could be addressed by employing improved world models or increasing the number of ensembles for the world models.

\begin{table}[t]
    \centering
    \caption{\textbf{D4RL MuJoCo Gym results.} Each number is a normalized score averaged over $100$ trials~\citep{fu2020d4rl}. Our results are averaged over $5$ seeds. The results for prior works are reported following their respective papers. \textbf{MOPO$^*$} is an improved version of \textbf{MOPO}, introduced in \citet{sun2023model}. We highlight the results that are better than $95\%$ of the best score. }
    \label{tab:d4rl_mujoco}
    \resizebox{\textwidth}{!}{
        \begin{tabular}{lccccccccccccc}
        \toprule
        & \multicolumn{3}{c}{\textbf{Model-free}} 
        & \multicolumn{2}{c}{\textbf{Seq. modeling}}        
        & \multicolumn{6}{c}{\textbf{Model-based}} \\
        \cmidrule(r){2-4}
        \cmidrule(r){5-6}
        \cmidrule(r){7-12}
        \textbf{Dataset}                 & \textbf{CQL} & \textbf{EDAC} & \textbf{IQL}   & \textbf{TT} & \textbf{TAP} & \textbf{MOPO$^*$} & \textbf{COMBO} & \textbf{RAMBO} & \textbf{MOBILE} & \textbf{CBOP} & \textbf{LEQ} (\textbf{ours}) \\ 
        \midrule
        \texttt{hopper-r}       & $5.3  $ & $25.3 $ & $7.6  $ & $6.9$ & - & $31.7 $ & $17.9 $ & $25.4$  & $\mb{31.9} $ & $\mb{32.8} $           & $\mb{32.4}$ {\tiny $\pm 0.3 $} \\ 
        \texttt{hopper-m}       & $61.9 $ & $\mb{101.6}$ & $66.3 $ & $67.4$ & $63.4$ & $62.8 $ & $97.2 $ & $87.0$  & $\mb{106.6}$ & $\mb{102.6}$         & $\mb{103.4}$ {\tiny $\pm 0.3$} \\ 
        \texttt{hopper-mr}      & $86.3 $ & $\mb{101.0}$ & $94.7 $ & $99.4$ & $87.3$ & $\mb{99.4} $ & $\mb{103.5}$ & $89.5$  & $99.5 $ & $\mb{104.3}$  & $\mb{103.9}$ {\tiny $\pm 1.3$}        \\ 
        \texttt{hopper-me}      & $96.9 $ & $\mb{110.7}$ & $91.5 $ & $\mb{110.0}$ & $105.5$ & $81.6 $ & $\mb{111.1}$ & $88.2$  & $\mb{112.6}$ & $\mb{111.6}$  & $\mb{109.4}$ {\tiny $\pm 1.8$}        \\
        \midrule
        \texttt{walker2d-r}     & $5.4  $ & $16.6 $ & $5.2  $ & $5.9$ & - & $7.4  $ & $7.0  $ & $0.0 $  & $17.9 $ & $17.8 $         & $\mb{21.5}$ {\tiny $\pm 0.1$}     \\ 
        \texttt{walker2d-m}     & $79.5 $ & $\mb{92.5} $ & $78.3 $ & $84.9$ & $64.9$ & $81.3 $ & $84.1 $ & $81.9$  & $84.9 $ & $\mb{87.7} $         & $74.9$ {\tiny $\pm 26.9$}     \\ 
        \texttt{walker2d-mr}    & $76.8 $ & $87.1 $ & $73.9 $ & $89.2$ & $66.8$ & $85.6 $ & $56.0 $ & $89.2$  & $89.9 $ & $92.7$                & $\mb{98.7}$ {\tiny $\pm 6.0$}     \\ 
        \texttt{walker2d-me}    & $109.1$ & $\mb{114.7}$ & $109.6$ & $101.9$& $107.4$ & $\mb{112.9}$ & $103.3$ & $56.7$  & $\mb{115.2}$ & $\mb{117.2}$  & $108.2$ {\tiny $\pm 1.3$}      \\
        \midrule
        \texttt{halfcheetah-r}  & $31.3 $ & $28.4 $ & $11.8 $ & $6.1$ & - & $\mb{38.5} $ & $\mb{38.8} $ & $\mb{39.5}$  & $\mb{39.3} $ & $32.8 $   & $30.8$ {\tiny $\pm 3.3$ }\\ 
        \texttt{halfcheetah-m}  & $46.9 $ & $65.9 $ & $47.4 $ & $46.9$ & $45.0$  & $73.0 $ & $54.2 $ & $\mb{77.9}$  & $\mb{74.6 }$ & $\mb{74.3} $   & $71.7$ {\tiny $\pm 4.4$ }    \\ 
        \texttt{halfcheetah-mr} & $45.3 $ & $61.3 $ & $44.2 $ & $44.1$ & $40.8$ & $\mb{72.1} $ & $55.1 $ & $\mb{68.7}$  & $\mb{71.7} $ & $66.4 $    & $65.5$ {\tiny $\pm 1.1$}  \\ 
        \texttt{halfcheetah-me} & $95.0 $ & $\mb{106.3}$ & $86.7 $ & $95.0$ & $91.8$  & $90.8 $ & $90.0 $ & $95.4$  & $\mb{108.2}$ & $\mb{105.4}$    & $\mb{102.8}$ {\tiny $\pm 0.4$} \\
        \midrule
        \textbf{Total} & $739.7$  & $911.4$ & $717.2$ & $747.5$ & - & $844.0$ & $802.0$  & $812.4$  & $\mb{959.5}$ & $\mb{953.4}$ & $\mb{923.2}$  \\
        \bottomrule
        \end{tabular}
    }
\end{table}

\subsection{Results on MuJoCo Gym locomotion tasks}

For the NeoRL benchmark in \Cref{tab:neorl}, \textbf{LEQ} outperforms most of the prior works, especially in the Hopper and Walker2d domains.
Furthermore, for D4RL MuJoCo Gym tasks in \Cref{tab:d4rl_mujoco}, \textbf{LEQ} achieves comparable results with the best score of prior works in $7$ out of $12$ tasks, 
These results show that \textbf{LEQ} serves as a general offline RL algorithm, widely applicable to various domains.

\begin{wrapfigure}{r}{0.56\textwidth}
\vspace{-1.5em}
\centering
\begin{subfigure}[t]{0.27\textwidth}
    \includegraphics[width=\textwidth]{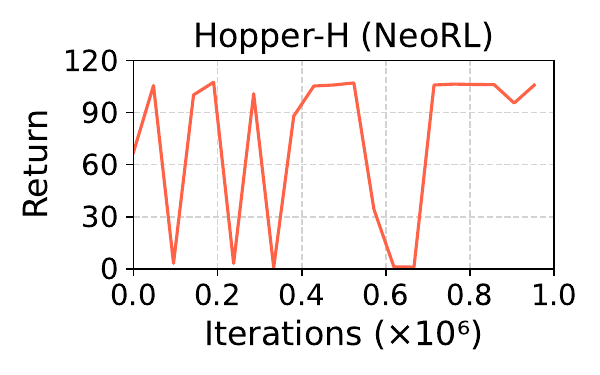}
\end{subfigure}
\begin{subfigure}[t]{0.27\textwidth}
    \includegraphics[width=\textwidth]{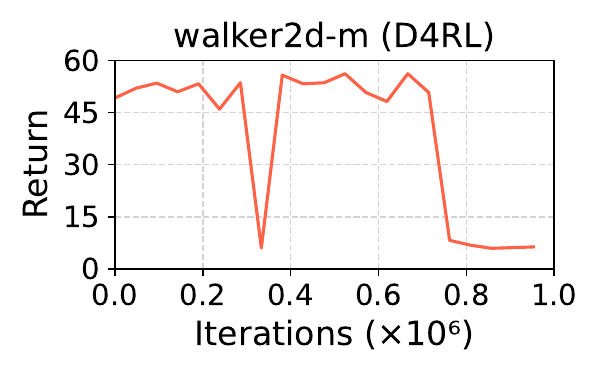}
\end{subfigure}
\vspace{-0.9em}
\caption{\textbf{High variance during training.} Our algorithm experiences oscillation due to optimistic imaginations near the initial states.}
\vspace{-3em}
\label{fig:high_variance_training_main}
\end{wrapfigure}

\begin{table}[t]
    \caption{\textbf{V-D4RL results.} We report the mean and standard deviation of returns over $3$ seeds. For \textbf{ROSMO}, we replace the categorical distribution in their official code with Gaussian distribution to support continuous action spaces. For \textbf{MOPO}, we implement \textbf{MOPO} on top of DreamerV3. The results for other prior works are reported following \citet{vd4rl,acro}.}
    \label{tab:vd4rl_results}
    \centering
    \resizebox{1.0\linewidth}{!}{
        \begin{tabular}{lcccccccc}
        \toprule
        & \multicolumn{4}{c}{\textbf{Model-free}} & \multicolumn{4}{c}{\textbf{Model-based}} \\
        \cmidrule(r){2-5}
        \cmidrule(r){6-9}
        \textbf{Method} & \textbf{BC} & \textbf{CQL} & \textbf{DrQ+BC} & \textbf{ACRO} & \textbf{OfflineDV2} & \textbf{MOPO} & \textbf{ROSMO} & \textbf{LEQ} \\
        \midrule
        \texttt{walker\_walk-random} & $2.0$ & $14.4$ & $5.5$ & $0.0$ & $\mb{28.7}$ & $3.2$ {\tiny $\pm 0.4$} & $2.9$ {\tiny $\pm 0.4$} & $22.4$ {\tiny $\pm 1.1$} \\
        \texttt{walker\_walk-medium} & $40.9$ & $14.8$ & $\mb{46.8}$ & $\mb{48.7}$ & $34.1$ & $37.1$ {\tiny $\pm 3.7$} & $\mb{49.8}$ {\tiny $\pm 2.3$} & $43.1$ {\tiny $\pm 3.2$} \\
        \texttt{walker\_walk-medium\_replay} & $16.5$ & $11.4$ & $28.7$ & $27.8$ & $\mb{56.5}$ & $11.4$ {\tiny $\pm 9.2$} & $28.1$ {\tiny $\pm 0.9$} & $43.0$ {\tiny $\pm 7.3$} \\
        \texttt{walker\_walk-medium\_expert} & $47.7$ & $56.4$ & $\mb{86.4}$ & $\mb{91.4}$ & $43.9$ & $46.6$ {\tiny $\pm 3.5$} & $82.9$ {\tiny $\pm 0.7$} & $\mb{87.2}$ {\tiny $\pm 2.4$} \\
        \texttt{cheetah\_run-random} & $0.0$ & $5.9$ & $5.8$ & $0.0$ & $\mb{31.7}$ & $3.2$ {\tiny $\pm 3.9$} & $2.5$ {\tiny $\pm 0.5$} & $14.8$ {\tiny $\pm 1.0$} \\
        \texttt{cheetah\_run-medium} & $\mb{51.6}$ & $40.9$ & $\mb{53.0}$ & $\mb{52.8}$ & $17.2$ & $34.7$ {\tiny $\pm 6.6$} & $\mb{50.8}$ {\tiny $\pm 2.5$} & $37.9$ {\tiny $\pm 8.0$} \\
        \texttt{cheetah\_run-medium\_replay} & $25.0$ & $10.7$ & $44.8$ & $41.7$ & $\mb{61.5}$ & $30.3$ {\tiny $\pm 5.1$} & $48.6$ {\tiny $\pm 1.5$} & $34.3$ {\tiny $\pm 1.1$} \\
        \texttt{cheetah\_run-medium\_expert} & $\mb{57.5}$ & $20.9$ & $50.6$ & $46.6$ & $10.4$ & $36.6$ {\tiny $\pm 5.6$} & $45.4$ {\tiny $\pm 2.2$} & $25.1$ {\tiny $\pm 6.6$} \\
        \midrule
        Total & $241.2$ & $175.4$ & $\mb{321.6}$ & $\mb{309.0}$ & $284.0$ & $203.3$ & $\mb{310.9}$ & $\mb{307.6}$ \\
        \bottomrule
        \end{tabular}
    }
\end{table}

Similar to \texttt{antmaze-medium}, \textbf{LEQ} experiences high variance in MuJoCo tasks. 
During training, \textbf{LEQ} often achieves high performance, but then, suddenly falls back to $0$, as shown in \Cref{fig:high_variance_training_main}. This is mainly because the learned models sometimes fail to capture failures (e.g. hopper and walker falling off) and predict an optimistic future (e.g. hopper and walker walking forward).

\subsection{Results with visual inputs}

As shown in \Cref{tab:vd4rl_results}, \textbf{LEQ} combined with DreamerV3~\citep{hafner2023mastering} performs on par with the state-of-the-art methods on V-D4RL datasets, demonstrating its scalability to visual observations. Notably, \textbf{LEQ} achieves the highest score on the \texttt{walker\_walk-medium\_expert} dataset among model-based methods, where \textbf{OfflineDV2} and \textbf{MOPO} struggles. \textbf{LEQ} also outperforms model-free approaches on \texttt{random} datasets and \texttt{walker\_walk-medium\_replay} dataset, highlighting the strength of model-based methods in datasets with diverse state-action distributions.

\subsection{Ablation studies}

To deeply understand how \textbf{LEQ} \textbf{(LEQ-$\lambda$)} work, we conduct ablation studies in AntMaze environments and answer to the following five questions: (1)~Is \textbf{lower expectile Q-learning} better than prior uncertainty-based penalization methods?  (2)~Does \textbf{lower expectile policy learning} better than existing policy learning methods? (3)~Does $\mathbf{\lambda}$\textbf{-return} help?
(4)~Which factor enables \textbf{LEQ} to work in AntMaze? and (5)~How do \textbf{imagination length} $H$ and \textbf{data expansion} affect the performance? 

\paragraph{(1) Lower expectile Q-learning.} 
We compare our \textit{lower expectile Q-learning} with the conservative value estimator in \textbf{MOBILE}~\citep{sun2023model}, which penalizes Q-values based on the standard deviation of Q-ensemble networks. In \Cref{tab:d4rl_antmaze_ablation_v2}, replacing lower expectile Q-learning with \textbf{MOBILE} decreases the success rate, both with $\lambda$-returns ($461.8$ vs $338.9$) and without them ($373.5$ vs $339.0$).
 
\paragraph{(2) Lower expectile policy learning.} 
We also compare our \textit{lower expectile policy learning} with AWR~\citep{peng2019advantage} and directly maximizing $Q(s,a)$. As shown in \Cref{tab:d4rl_antmaze_ablation_v2}, \textbf{LEQ} shows better performance compared to AWR ($461.8$ vs $114.2$) and maximizing the $Q$-values ($461.8$ vs $410.3$). 

\paragraph{(3) $\lambda$-returns.} The first three rows of \Cref{tab:d4rl_antmaze_ablation_v2} show the effect of $\lambda$\textit{-returns} in \textbf{LEQ}. Substituting $\lambda$-returns with $H$-step returns ($461.8$ vs $290.3$) or $1$-step returns ($461.8$ vs $373.5$) significantly decreases the performance.
Moreover, while \textbf{LEQ} shows better performance than \textbf{MOBILE} without $\lambda$-returns ($373.5$ vs $339.0$), the performance of \textbf{LEQ} gets significantly better with $\lambda$-returns, compared to \textbf{MOBILE} with $\lambda$-returns ($461.8$ vs $338.9$).

\begin{table}[t]
    \caption{\textbf{Impact of lower expectile Q-learning and $\lambda$-returns on AntMaze.} We ablate the effects of lower expectile and $\lambda$-returns on the critic and policy updates in \textbf{LEQ}. The design choices from \textbf{LEQ} are colored in \blue{\textbf{blue}} and other options are colored in \red{\textbf{red}}. The results are averaged over $5$ seeds.}
    \label{tab:d4rl_antmaze_ablation_v2}
    \resizebox{\linewidth}{!}{
        \begin{tabular}{ccc|cccccccc|c}
        \toprule
        \multicolumn{3}{c}{\textbf{Design choices}} & \multicolumn{2}{c}{\texttt{umaze}} & \multicolumn{2}{c}{\texttt{medium}} & \multicolumn{2}{c}{\texttt{large}} & \multicolumn{2}{c}{\texttt{ultra}} & \multirow{2}{*}{\textbf{Total}} \\        
        conservatism & critic update & policy update & \texttt{umaze} & \texttt{diverse} & \texttt{play} & \texttt{diverse} & \texttt{play} & \texttt{diverse} & \texttt{play} & \texttt{diverse} &  \\
        \midrule
        \blue{Lower expectile} & \blue{$\lambda$-returns} & \blue{$\lambda$-returns} & $\mb{94.4}$ {\tiny $\pm 6.3$}     & $\mb{71.0}$ {\tiny $\pm 12.3$}    & $50.2$ {\tiny $\pm 39.9$}        & $46.2$ {\tiny $\pm 23.2$}    & $\mb{58.6}$ {\tiny $\pm 9.1 $}   & $\mb{60.2}$ {\tiny $\pm 18.3$} & $25.8$ {\tiny $\pm 18.2$} & $\mb{55.8}$ {\tiny $\pm 18.3$} & $\mb{461.8}$\\
        \blue{Lower expectile} & \red{$H$-step} & \red{$H$-step} & $\mb{93.0}$ {\tiny $ \pm 3.4$ }        & $60.7$ {\tiny $\pm 10.4$}    & $46.3$ {\tiny $\pm 32.4$}        & $0.0 $ {\tiny $\pm 0.0 $}    & $\mb{57.0}$ {\tiny $\pm 25.6$}   & $33.3$ {\tiny $\pm 43.0$}     & $0.0 $ {\tiny $\pm 0.0 $} & $0.0$ {\tiny $\pm 0.0$} & $290.3$\\
        \blue{Lower expectile} & \red{$1$-step} & \red{$Q(\s,\ac)$} & $89.6$ {\tiny $ \pm 4.8$ }        & $37.0$ {\tiny $\pm 32.8$}    & $55.8$ {\tiny $\pm 28.7$}   & $29.8$ {\tiny $\pm 24.5$}    & $34.2$ {\tiny $\pm 13.4$}        & $49.3$ {\tiny $\pm 9.0 $}     & $\mb{42.2}$ {\tiny $\pm 13.2$} & $35.6$ {\tiny $\pm 13.0$} & $373.5$\\
        \midrule     
        \blue{Lower expectile} & \blue{$\lambda$-returns} & \red{$Q(\s,\ac)$} & $    81.0$ {\tiny $\pm     10.5 $}& $    46.2$ {\tiny $\pm     16.8 $}& $     \mb{61.8}$ {\tiny $\pm      12.4 $}& $     40.6$ {\tiny $\pm      11.4 $}& $    39.2$ {\tiny $\pm     12.5 $}& $    40.5$ {\tiny $\pm     11.7 $}& $    \mb{42.8}$ {\tiny $\pm     21.8 $}& $    47.5$ {\tiny $\pm      5.9 $}  & $410.3$  \\
        \blue{Lower expectile} & \blue{$\lambda$-returns} & \red{AWR}  & $    69.2$ {\tiny $\pm      7.5 $}& $    44.4$ {\tiny $\pm     18.4 $}& $     0.6$ {\tiny $\pm      0.6 $}& $     0.0$ {\tiny $\pm      0.0 $}& $     0.0$ {\tiny $\pm      0.0 $}& $     0.0$ {\tiny $\pm      0.0 $}& $     0.0$ {\tiny $\pm      0.0 $}& $     0.0$ {\tiny $\pm      0.0 $} & $114.2$ \\
        \midrule        
        \red{MOBILE} & \blue{$\lambda$-returns} & \blue{$\lambda$-returns}  & $84.3$ {\tiny $ \pm 3.5$ }        & $40.3$ {\tiny $\pm 20.4$}    & $51.3$ {\tiny $\pm 9.0$ }        & $39.7$ {\tiny $\pm 12.5$}    & $28.3$ {\tiny $\pm 21.5$}        & $33.7$ {\tiny $\pm 10.0$}     & $38.0$ {\tiny $\pm 27.1$} & $23.3$ {\tiny $\pm 4.9$ } & $338.9$ \\
        \red{MOBILE} & \red{$1$-step} & \red{$Q(\s,\ac)$} & $59.5$ {\tiny $ \pm 3.5$ }        & $46.5$ {\tiny $\pm 1.5$ }    & $57.0$ {\tiny $\pm 11.0$}        & $\mb{54.0}$ {\tiny $\pm 9.0$ }    & $23.5$ {\tiny $\pm 19.5$}        & $38.5$ {\tiny $\pm 1.5 $}     & $39.5$ {\tiny $\pm 11.5$} & $20.5$ {\tiny $\pm 20.5$} & $339.0$ \\
        \bottomrule
        \end{tabular}
    }
\end{table}

\paragraph{(4) What makes model-based offline RL work in AntMaze?} 
\textbf{LEQ} shows outstanding performance compared to previous offline model-based RL methods, especially in \texttt{large} and \texttt{ultra} mazes.
To understand which aspects of \textbf{LEQ} enabled this success, we applied its changes to \textbf{MOBILE} and analyzed the impact. The results are detailed in \Cref{tab:antmaze_ablation_MOBILE}.

\begin{table}[b]
    \caption{\textbf{Impact of hyperparameters in \textbf{MOBILE$^*$} on AntMaze.} \textbf{MOBILE$^*$} uses the hyperparameters from \red{\textbf{MOBILE}}: \red{$\beta = 0.95$, $\gamma = 0.99$, and $R=5$}, whereas \blue{\textbf{LEQ}} uses \blue{$\beta = 0.25$, $\gamma = 0.997$, and $R=10$}. The results show that $\beta$ is the most critical hyperparameter that makes \textbf{MOBILE$^*$} work in AntMaze. }
    \label{tab:antmaze_ablation_MOBILE}
    \resizebox{\linewidth}{!}{
        \begin{tabular}{ccc|cccccccc|c}
        \toprule
        \multicolumn{3}{c}{\textbf{\;\;Hyperparams.}} & \multicolumn{2}{c}{\texttt{umaze}} & \multicolumn{2}{c}{\texttt{medium}} & \multicolumn{2}{c}{\texttt{large}} & \multicolumn{2}{c}{\texttt{ultra}} & \multirow{2}{*}{\textbf{Total}} \\
        $\beta$ & $\gamma$ & $R$ & \texttt{umaze} & \texttt{diverse} & \texttt{play} & \texttt{diverse} & \texttt{play} & \texttt{diverse} & \texttt{play} & \texttt{diverse} &  \\
        \midrule
        \blue{$0.25$} & \blue{$0.997$} & \blue{$10$}    & $53.8 $ {\tiny $\pm 26.8$ } & $\mb{22.5}$ {\tiny $\pm 22.2$ }    & $54.0$ {\tiny $\pm 5.8$ }        & $\mb{49.5}$ {\tiny $\pm 6.2$ }    & $\mb{28.3}$ {\tiny $\pm 6.0$}        & $\mb{28.0}$ {\tiny $\pm 11.4$}     & $\mb{25.5}$ {\tiny $\pm 6.9$ } & $\mb{23.8}$ {\tiny $\pm 15.8$ } & $\mb{285.3}$ \\
        \blue{$0.25$} & \blue{$0.997$} & \red{$5$}    & $\mb{74.0}$ {\tiny $\pm 6.9$} & $3.7$ {\tiny $\pm 2.6$} & $54.7$ {\tiny $\pm 27.9$}  & $28.0$ {\tiny $\pm 9.6$} & $18.7$ {\tiny $\pm 18.6$} & $8.0$ {\tiny $\pm 9.3$} & $9.7$ {\tiny $\pm 8.2$} & $9.0$ {\tiny $\pm 3.7$} & $205.7$ \\       
        \blue{$0.25$} & \red{$0.99$} & \blue{$10$}   & $39.7$ {\tiny $\pm 23.4$} & $5.0$ {\tiny $\pm 7.1$} & $39.3$ {\tiny $\pm 27.9$}  & $38.0$ {\tiny $\pm 15.0$} & $0.0$ {\tiny $\pm 0.0$} & $3.7$ {\tiny $\pm 5.2$} & $0.0$ {\tiny $\pm 0.0$} & $0.0$ {\tiny $\pm 0.0$} & $125.7$ \\      
        \blue{$0.25$} & \red{$0.99$} & \red{$5$}           & $\mb{77.0}$ {\tiny $\pm      6.4 $}& $    20.4$ {\tiny $\pm     15.7 $}& $    \mb{64.6}$ {\tiny $\pm     11.1 $}& $    31.6$ {\tiny $\pm     16.9 $}& $     2.6$ {\tiny $\pm      2.8 $}& $     7.2$ {\tiny $\pm      8.9 $}& $     4.6$ {\tiny $\pm      3.0 $}& $     5.0$ {\tiny $\pm      4.6 $} & $213.0$   \\         
        \red{$0.95$} & \blue{$0.997$} & \blue{$10$}    & $     0.0$ {\tiny $\pm      0.0 $}& $     0.0$ {\tiny $\pm      0.0 $}& $     1.8$ {\tiny $\pm      3.0 $}& $     0.5$ {\tiny $\pm      0.9 $}& $     0.2$ {\tiny $\pm      0.4 $}& $     2.2$ {\tiny $\pm      2.3 $}& $     1.0$ {\tiny $\pm      1.7 $}& $     0.0$ {\tiny $\pm      0.0 $}  & $5.7$  \\
        \red{$0.95$} & \blue{$0.997$} & \red{$5$}         & $     0.0$ {\tiny $\pm      0.0 $}& $     0.0$ {\tiny $\pm      0.0 $}& $     7.2$ {\tiny $\pm      4.1 $}& $     1.6$ {\tiny $\pm      2.1 $}& $     9.6$ {\tiny $\pm      7.1 $}& $     5.4$ {\tiny $\pm      4.9 $}& $     0.0$ {\tiny $\pm      0.0 $}& $     1.8$ {\tiny $\pm      2.7 $} & $25.6$   \\
        \red{$0.95$} & \red{$0.99$} & \blue{$10$}                & $     0.0$ {\tiny $\pm      0.0 $}& $     0.0$ {\tiny $\pm      0.0 $}& $     5.0$ {\tiny $\pm      5.1 $}& $     0.6$ {\tiny $\pm      1.2 $}& $     7.4$ {\tiny $\pm     14.8 $}& $     1.6$ {\tiny $\pm      3.2 $}& $     0.0$ {\tiny $\pm      0.0 $}& $     0.0$ {\tiny $\pm      0.0 $} & $14.6$   \\        
        \red{$0.95$} & \red{$0.99$} & \red{$5$}    & $     1.0$ {\tiny $\pm      2.0 $}& $     0.0$ {\tiny $\pm      0.0 $}& $     6.4$ {\tiny $\pm      5.5 $}& $     5.0$ {\tiny $\pm      5.0 $}& $     0.8$ {\tiny $\pm      1.6 $}& $     0.8$ {\tiny $\pm      1.2 $}& $     0.0$ {\tiny $\pm      0.0 $}& $     0.0$ {\tiny $\pm      0.0 $} & $14.0$   \\           
        \bottomrule
        \end{tabular}
    }
\end{table}

We first re-implement \textbf{MOBILE} with some technical tricks used in \textbf{LEQ} (denoted as \textbf{MOBILE$^*$}): LayerNorm~\citep{ba2016layer}, SymLog~\citep{hafner2023mastering}, single Q-network, and no target Q-value clipping. However, \textbf{MOBILE$^*$} still achieves a barely non-zero score, $14.0$. 

We found that reducing the ratio $\beta$ between the losses from imaginary rollouts and dataset transitions is key to make \textbf{MOBILE$^*$} work (i.e. achieving meaningful performances in \texttt{umaze} and \texttt{medium} mazes, with a total score of $213.0$). This adjustment also allows for a higher discount rate and longer imagination horizon, yielding the best results for \textbf{MOBILE$^*$}. We suggest that utilizing the true transition from the dataset is important in long-horizon tasks, which has been undervalued in prior works. We provide additional extensive ablation results on \textbf{LEQ} in \Cref{sec:more_ablation_results}.

\paragraph{(5) Imagination length $H$ and dataset expansion.} 
As shown in \Cref{tab:d4rl_antmaze_ablation_horizon}, the performance increases when it goes to $H=10$ from $H=5$, but it drops when $H=15$. This result shows the trade-off of using the world model -- the further the agent imagines, the more the agent becomes robust to the error of the critic, but the more it becomes prone to the error from the model prediction. 

We also evaluate \textbf{LEQ} without the dataset expansion. In AntMaze, the results with and without the dataset expansion are similar, as shown in \Cref{tab:d4rl_antmaze_ablation_horizon}. On the other hand, the dataset expansion makes the policy more stable and better in the D4RL MuJoCo tasks (in Appendix, \Cref{tab:mujoco_ablation}).

\begin{table}[t]
    \caption{\textbf{LEQ with different imagination length $H$ and data expansion.} A longer $H$ can mitigate critic biases, while increasing model errors, which leads to poor performance. Each number is averaged over $5$ random seeds.}
    \centering
    \label{tab:d4rl_antmaze_ablation_horizon}
    \resizebox{0.9\linewidth}{!}{
        \begin{tabular}{lccccc}
        \toprule
        \textbf{Dataset} & $\mathbf{H=10}$ \textbf{(ours)}   & $H=5$ & $H=15$ & w/o dataset expansion & \\
        \midrule
        \texttt{antmaze-umaze}            & $\mb{94.4}$ {\tiny $\pm 6.3$ }   & $\mb{95.2} $ {\tiny $\pm 1.7 $} & $\mb{98.6} $ {\tiny $\pm 0.5 $} & $\mb{97.4} $ {\tiny $\pm 1.4 $} &\\
        \texttt{antmaze-umaze-diverse}    & $\mb{71.0}$ {\tiny $\pm 12.3$ }  & $\mb{67.2} $ {\tiny $\pm 9.1 $} & $\mb{70.7} $ {\tiny $\pm 15.2 $} & $63.0 $ {\tiny $\pm 23.2$} &\\
        \texttt{antmaze-medium-play}      & $58.8$ {\tiny $\pm 33.0$ }  & $46.4 $ {\tiny $\pm 31.9$} & $\mb{76.3} $ {\tiny $\pm 17.2 $} & $58.2 $ {\tiny $\pm 28.0$} &\\
        \texttt{antmaze-medium-diverse}   & $\mb{46.2}$ {\tiny $\pm 23.2$ }  & $18.6 $ {\tiny $\pm 28.7 $} & $30.3 $ {\tiny $\pm 40.1 $} & $28.6 $ {\tiny $\pm 33.7$} &\\
        \texttt{antmaze-large-play}       & $58.6$ {\tiny $\pm 9.1$ }   & $48.6 $ {\tiny $\pm 15.4 $} & $\mb{62.0} $ {\tiny $\pm 9.9 $} & $56.0 $ {\tiny $\pm 9.8 $} &\\
        \texttt{antmaze-large-diverse}    & $\mb{60.2}$ {\tiny $\pm 18.3$ }  & $35.2 $ {\tiny $\pm 8.7 $} & $33.0 $ {\tiny $\pm 3.2 $} & $\mb{57.0} $ {\tiny $\pm 4.5 $} &\\
        \texttt{antmaze-ultra-play}       & $25.8$ {\tiny $\pm 18.2$}   & $\mb{54.2} $ {\tiny $\pm 10.8 $} & $0.0 $ {\tiny $\pm 0.0 $} & $39.2 $ {\tiny $\pm 15.1$} &\\
        \texttt{antmaze-ultra-diverse}    & $\mb{55.8}$ {\tiny $\pm 18.3$ }  & $39.4 $ {\tiny $\pm 6.1 $} & $0.0 $ {\tiny $\pm 0.0 $} & $36.0 $ {\tiny $\pm 12.0$} &\\
        \midrule
        \textbf{Total}   & $\mb{470.4}$ & $404.8$ & $371.0$ & $435.4$ \\
        \bottomrule
        \end{tabular}
    }
\end{table}

\section{Limitations}
\label{sec:limitations}

Following prior work on model-based offline RL~\citep{sun2023model, jeong2023conservative}, we assume access to the ground-truth \textit{termination function} of a task, different from online model-based RL approaches, which learn a termination function from interactions. As shown in \Cref{tab:d4rl_antmaze_ablation_terminal},
\rebut{Using a learned termination function instead of the ground-truth termination function results in a significant performance drop ($461.8 \rightarrow 232.6$), particularly in diverse datasets ($233.2 \rightarrow 66.8$), where termination signals are limited because the dataset is collected by navigating to randomly selected goals.} While relying on the ground-truth termination function simplifies the problem, it limits the applicability of the method to scenarios where this information is readily available. Extending the proposed approach to learn terminal signals from the dataset would be an immediate next step.

\section{Conclusion}

In this paper, we propose a novel offline model-based reinforcement learning method, LEQ, which uses \textit{expectile regression} to get a \textit{conservative evaluation} of a policy from model-generated trajectories. Expectile regression eases the pain of constructing the whole distribution of Q-targets and allows for learning a conservative Q-function via sampling. Combined with $\lambda$-returns in both critic and policy updates for the imaginary rollouts, the policy can receive learning signals that are more robust to both model errors and critic errors. We empirically show that LEQ robustly improves the performance of model-based approaches in various domains, including state-based locomotion, long-horizon navigation, and visual control tasks.

\subsubsection*{Acknowledgments}
We would like to thank Junik Bae for helpful discussion. This work was supported in part by the Institute of Information \& Communications Technology Planning \& Evaluation (IITP) grant (RS-2020-II201361, Artificial Intelligence Graduate School Program (Yonsei University)) and the National Research Foundation of Korea (NRF) grant (RS-2024-00333634) funded by the Korean Government (MSIT). Kwanyoung Park was supported by the Electronics and Telecommunications Research Institute (ETRI) grant (24ZR1100) and IITP grant (RS-2024-00509279, Global AI Frontier Lab) funded by the Korea government (MSIT).

\subsubsection*{Ethics Statement}
\label{sec:broader_impacts}

Our method aims to increase the ability of autonomous agents, such as robots and self-driving cars, to learn from static, offline data without interacting with the world. This enables autonomous agents to utilize data with diverse qualities (not necessarily from experts). We believe that this paper does not have any immediate negative ethical concerns.

\subsubsection*{Reproducibility Statement}

To ensure the reproducibility of our work, we provide the full code of LEQ in the supplementary materials, along with instructions to replicate the experiments presented in the paper. We provide the experimental details in \Cref{sec:training_details} and the proof on the derivation of our surrogate policy objective in \Cref{sec:proof}.

\bibliographystyle{plainnat}
\bibliography{conferences, main}



\newpage

\appendix

\section{Training Details}
\label{sec:training_details}

\paragraph{Computing resources.}
All experiments are done on a single RTX 4090 GPU and $8$ AMD EPYC 9354 CPU cores. For state-based environments, we use $5$ different random seeds for each experiment and report the mean and standard deviation, while each offline RL experiment takes $2$ hours for ours, $12$ hours for MOBILE, and $24$ hours for CBOP. For pixel-based environments, we use $3$ different random seeds and each experiment takes $2$ hours for ours and $8$ hours for ROSMO.

\paragraph{Environment details.}
For state-based locomotion tasks, we use the datasets provided by D4RL~\citep{fu2020d4rl} and NeoRL~\citep{qin2022neorl}. Following IQL~\citep{kostrikov2022iql}, we normalize rewards using the maximum and minimum returns of all trajectories. We use the true termination functions of the environments, implemented in MOBILE~\citep{sun2023model}. For pixel-based environments, we do not normalize the rewards.

For AntMaze tasks, we use the datasets provided by D4RL~\citep{fu2020d4rl}. Following IQL~\citep{kostrikov2022iql}, we subtract $1$ from the rewards in the datasets so that the agent receives $-1$ for each step and $0$ on termination. We use the true termination functions of the environments. The termination functions of the AntMaze tasks are not deterministic because a goal of a maze is randomized every time the environment is reset. Nevertheless, we follow the implementation of CBOP~\citep{jeong2023conservative}, where the termination region is set to a circle around the mean of the goal distribution with the radius $0.5$.

\paragraph{Implementation details of compared methods.}
For all compared methods, we use the results from their corresponding papers when available. For IQL~\citep{kostrikov2022iql}, we run the official implementation with $5$ seeds to reproduce the results for the random datasets in D4RL and NeoRL.
For the AntMaze tasks, we run the official implementation of MOBILE and CBOP with $5$ random seeds. Please note that the original MOBILE implementation does not use the true termination function, so we replace it with our termination function. For MOPO, COMBO, and RAMBO, we use the results reported in RAMBO~\citep{rigter2022rambo}.
For DMControl tasks, we replace the categorical distribution of the policy with gaussian distribution of the official ROSMO codebase, and run the experiments with sampling hyperparameter $N = 4$. 

\paragraph{World models.}
For state-based environments, we use the architecture and training script from OfflineRL-Kit~\citep{offinerlkit}, matching the implementation of MOBILE~\citep{sun2023model}. Each world model is implemented as a $4$-layer MLPs with the hidden layer size of $200$. We construct an ensemble of world models by selecting $5$ out of $7$ models with the best validation scores. We pretrain the ensemble of world models for each of $5$ random seeds (i.e. training in total $35$ world models and using $25$ models), which takes approximately $5$ hours in average.
For pixel-based environments, we use the 12M model of DreamerV3~\citep{hafner2023mastering} and pretrain the world model using its loss function. We follow the implementation of OfflineDV2~\citep{vd4rl}, training 7 ensemble of world models for stochastic latent prediction, which takes $4$ hours in average. \rebut{We select one model for each step, following the design choice of MOBILE, as we found that randomly choosing a model every step can make imaginary rollouts more robust to model biases, leading to better performance.}

\paragraph{Policy and critic networks.}
For state-based environments, we use $3$-layer MLPs with size of $256$ both for the policy network and the critic network. We use layer normalization~\citep{ba2016layer} to prevent catastrophic over/underestimation~\citep{ball2023efficient}, and squash the state inputs using $\text{symlog}$ to keep training stable from outliers in long-horizon model rollouts~\citep{hafner2023mastering}. For pixel-based environments, we use the architecture of the 12M model in DreamerV3 models.

\paragraph{Pretraining policy and critic networks.}
For some environments, we found that a randomly initialized policy can lead to abnormal rewards or transition prediction from the world models in the early stage, leading to unstable training~\citep{jelley2024efficient}. Following CBOP~\citep{jeong2023conservative}, we pretrain a policy $\pi_{\theta}$ and a critic $Q_{\phi}$ using behavioral cloning and FQE~\citep{le2019batch}, respectively for state-based experiments. We use a slightly different implementation of FQE from the original implementation, where the $\arg\min$ operation is approximated with mini-batch gradient descent, similar to standard Q-learning as shown in \Cref{alg:FQE}.

\begin{algorithm}[ht]
\caption{FQE: Fitted Q Evaluation~\citep{le2019batch}}
\label{alg:FQE}
\begin{algorithmic}[1]
    \REQUIRE Offline dataset $\mathcal{D}_\text{env}$, policy $\pi_{\theta}$
    \STATE Randomly initialize Q-function $Q_{\phi}$
    \WHILE{not converged}
        \STATE $\{\s_i, \ac_i, r_i, \s'_i\}_{i=1}^N \sim \mathcal{D}_\text{env}$
        \STATE $y_i = \texttt{sg}(r_i + Q_{\phi}(\s'_i, \pi_{\theta}(\s'_i)))$
        \COMMENT{$\texttt{sg}(\cdot)$ is stop-gradient operator}
        \STATE $L_{\text{FQE}}(\phi) = \frac{1}{N} \sum_{i=1}^{N} (Q_{\phi}(\s_i, \ac_i) - y_i)^2$
        \vspace{0.2em}
        \STATE Update $Q_{\phi}$ using gradient descent to minimize $L_{\text{FQE}}(\phi)$
    \ENDWHILE
\end{algorithmic}
\end{algorithm}

\newpage
\paragraph{Comparisons with prior methods.} We provide a comparison of LEQ with the prior model-based approaches and the baseline methods used in our ablation studies in \Cref{tab:method_comparisons}. 

\begin{table}[H]
    \small
    \centering
    \caption{Comparisons with the prior model-based methods and the baseline method. The hyperparameters same with LEQ are colored in \blue{\textbf{blue}}; others are colored in \red{\textbf{red}}.}
    \label{tab:method_comparisons}
    \begin{tabular}{L{0.2\textwidth}L{0.16\textwidth}L{0.16\textwidth}L{0.16\textwidth}L{0.16\textwidth}}
    \toprule
    \textbf{Components} & \textbf{CBOP} & \textbf{MOBILE} & \textbf{MOBILE$^*$}  & \textbf{LEQ (ours)} \\ 
    \midrule 
    Training scheme & \red{MVE~\citep{feinberg2018model}} & \red{MBPO~\citep{janner2019trust}} & \red{MBPO~\citep{janner2019trust}} & \blue{\rebut{Dreamer}~\citep{hafner2023mastering}} \\[0.2cm]
    
    Conservatism    & \red{Lower-confidence bound} & \red{Lower-confidence bound} & \red{Lower-confidence bound}  & \blue{Lower expectile} \\[0.2cm]

    Policy             & \red{Stochastic} & \red{Stochastic}  & \red{Stochastic} & \blue{Deterministic} \\[0.2cm]

    Policy objective   & \red{$Q(\s,\ac)$} & \red{$Q(\s,\ac)$} & \red{$Q(\s,\ac)$} & \blue{$\lambda$-returns} \\[0.2cm]
    
    Policy pretraining & \blue{BC}    & \red{--} & \red{--} &  \blue{BC}\\[0.2cm]

    \# of critics      & \red{20-50} & \red{2}    & \blue{1}  & \blue{1}\\[0.2cm]    

    Critic objective   & \red{Multi-step (adaptive weighting)} & \red{One-step} & \red{One-step} &  \blue{$\lambda$-returns} \\[1.0cm]

    Critic pretraining & \blue{FQE~\citep{le2019batch}}   & \red{--} & \red{--} &  \blue{FQE~\citep{le2019batch}} \\[0.2cm]

    Horizon length ($H$) & \blue{10}   & \red{1} & \red{1} &  \blue{10} \\[0.2cm] 

    Expansion length ($R$) & \red{--}   & \red{1 or 5} & \red{10} & \blue{5} \\[0.2cm] 

    Discount rate ($\gamma$)  & \red{0.99} & \red{0.99} & \blue{0.997} & \blue{0.997}  \\[0.2cm]  

    $\beta$ in \Cref{eq:critic_total}  & \red{1.0}  & \red{0.95} & \blue{0.25}  & \blue{0.25} \\[0.2cm]  

    Impl. tricks    & \red{--} & \red{Clip Q-values with $0$} & \blue{LayerNorm + Symlog} & \blue{LayerNorm + Symlog} \\[0.2cm]  

    \midrule
    Running time    & 24h & 12h  & 40m & 2h \\[0.0cm]  

    \bottomrule
    \end{tabular}
\end{table}

\paragraph{Hyperparameters of LEQ.}
For state-based experiments, we report task-agnostic hyperparameters of our method in \Cref{tab:hyperparameters}. We note that \textbf{we use the same hyperparameters} across all state-based tasks, except $\tau$. We search the value of $\tau$ in $\{0.1, 0.3, 0.4, 0.5\}$ and report the best value for the main experimental results. In addition, we report the exhaustive results in \Cref{tab:antmaze_sweep,tab:mujoco_sweep}, and summarize $\tau$ used in the main results in \Cref{tab:task_specific_hyperparameters}. 

For pixel-based experiments, we decrease the horizon length of DreamerV3 from $15$ to $5$, since we do not observe performance improvement with the longer imagination horizon (\Cref{sec:more_ablation_results}), consistent with the finding from \citet{vd4rl}. Moreover, we remove the entropy bonus, as exploration is not required in offline RL. We search the value of $\tau$ in $\{0.1, 0.3, 0.4, 0.5\}$ and report the best value in the main experimental results. All other hyperparameters follow the default settings of DreamerV3. We also report the exhaustive results in \Cref{tab:vd4rl_sweep}, and summarize $\tau$ in \Cref{tab:task_specific_hyperparameters_vd4rl}. 

\begin{table}[H]
    \small
    \centering
    \caption{Shared hyperparameters of LEQ in state-based experiments.}
    \label{tab:hyperparameters}
    \begin{tabular}{lll}
    \toprule
    \textbf{Hyperparameters} & \textbf{Value} & \textbf{Description} \\
    \midrule
    $lr_{\text{actor}}$  & 3e-5  & Learning rate of actor \\
    $lr_{\text{critic}}$ & 1e-4  & Learning rate of critic \\
    Optimizer            & Adam  & Optimizer \\
    
    $T_{\text{expand}}$  & 5000  & Interval of expanding dataset \\
    $N_{\text{expand}}$  & 50000 & Number of data for each expansion of dataset \\
    $R$                  & 5     & Rollout length for dataset expansion \\
    $\sigma_{\text{exp}}$ & 1.0   & Exploration noise for dataset expansion \\

    $N_{\text{iter}}$    & 1M    & Total number of gradient steps.\\
    $B_{\text{env}}$    & 256   & Batch size from original dataset \\
    $B_{\text{model}}$ & 256   & Batch size from expanded dataset \\
    
    $\gamma$             & 0.997 & Discount factor \\
    $\lambda$            & 0.95  & $\lambda$ value for $\lambda$-return \\
    $H$                  & 10    & Imagination length \\
    $\omega_{\text{EMA}}$ & 1     & Weight for critic EMA regularization \\
    $\epsilon_{\text{EMA}}$ & 0.995  & Critic EMA decay \\
    \bottomrule
    \end{tabular}
\end{table}

\begin{table}[H]
    \small
    \centering
    \caption{Task-specific hyperparameter $\tau$ of LEQ in state-based experiments.}
    \label{tab:task_specific_hyperparameters}
    \begin{tabular}{lll}
    \toprule
    \textbf{Domain} & \textbf{Task} & $\tau$ \\
    \midrule
    AntMaze & \texttt{umaze         }  & $0.1$ \\
            & \texttt{umaze-diverse }  & $0.1$ \\
            & \texttt{medium-play   } & $0.3$ \\
            & \texttt{medium-diverse} & $0.1$ \\
            & \texttt{large-play    } & $0.3$ \\
            & \texttt{large-diverse } & $0.3$ \\
            & \texttt{ultra-play    } & $0.1$ \\
            & \texttt{ultra-diverse } & $0.1$ \\
    \midrule
    MuJoCo  & \texttt{hopper-r}                 & $0.1$ \\
            & \texttt{hopper-m}                 & $0.1$ \\
            & \texttt{hopper-mr}          & $0.3$ \\
            & \texttt{hopper-me}          & $0.1$ \\
            & \texttt{walker2d-r}               & $0.1$ \\
            & \texttt{walker2d-m}               & $0.3$ \\
            & \texttt{walker2d-mr}        & $0.5$ \\
            & \texttt{walker2d-me}        & $0.1$ \\
            & \texttt{halfcheetah-r}            & $0.3$ \\
            & \texttt{halfcheetah-m}            & $0.3$ \\
            & \texttt{halfcheetah-mr}     & $0.4$ \\
            & \texttt{halfcheetah-me}     & $0.1$ \\
    \midrule
    NeoRL   & \texttt{Hopper-L}        & $0.1$ \\
            & \texttt{Hopper-M }       & $0.1$ \\
            & \texttt{Hopper-H  }      & $0.1$ \\
            & \texttt{Walker2d-L }     & $0.3$ \\
            & \texttt{Walker2d-M  }    & $0.1$ \\
            & \texttt{Walker2d-H   }   & $0.1$ \\
            & \texttt{HalfCheetah-L }  & $0.1$ \\
            & \texttt{HalfCheetah-M  } & $0.3$ \\
            & \texttt{HalfCheetah-H   }& $0.3$ \\
    \bottomrule
    \end{tabular}
\end{table}

\paragraph{Task-specific hyperparameters of the compared methods.}
We report the best hyperparameters of MOBILE$^*$ for the AntMaze tasks in \Cref{tab:task_specific_hyperparameters_mobile,tab:task_specific_hyperparameters_mobile_lambda}.
For MOBILE and MOBILE$^*$, we search the value of $c$ within $\{0.1, 0.5, 1.0, 1.5\}$, as suggested in MOBILE~\citep{sun2023model}, where $c$ is the coefficient of the penalized bellman operator:
\begin{equation}
    T\hat{Q}(\s, \ac) = r(\s, \ac) + \gamma Q(\s', \ac') - c \cdot \text{Std}(Q(\s', \ac')).
\end{equation} 
For CBOP, we conduct hyperparameter search for $\psi$ in $\{0.5,2.0,3.0,5.0\}$, as suggested in the original paper, where $\psi$ is an LCB coefficient of CBOP. We do not report the best hyperparameter for MOBILE and CBOP because both methods score zero points for all hyperparameters in AntMaze.

For MOPO in V-D4RL experiments, we search for $\lambda$ in $\{3, 10\}$, as suggested in~\citet{vd4rl}, where $\lambda$ is the penalization coefficient in MOPO. Then, we report the best value in \Cref{tab:task_specific_hyperparameters_vd4rl_mopo}. For ROSMO, we use the hyperparameter specified in the official code.

\begin{table}[H]
    \begin{minipage}{.45\linewidth}
    \centering
    \caption{Task-specific hyperparameters in MOBILE$^*$.}
    \label{tab:task_specific_hyperparameters_mobile}
    \begin{tabular}{L{0.2\textwidth}L{0.45\textwidth}L{0.15\textwidth}}
    \toprule
    \textbf{Domain} & \textbf{Task} & $c$ \\
    \midrule
    AntMaze & \texttt{umaze         } & $1.0$ \\
            & \texttt{umaze-diverse } & $1.0$ \\
            & \texttt{medium-play   } & $1.0$ \\
            & \texttt{medium-diverse} & $0.1$ \\
            & \texttt{large-play    } & $0.1$ \\
            & \texttt{large-diverse } & $0.1$ \\
            & \texttt{ultra-play    } & $1.0$ \\
            & \texttt{ultra-diverse}  & $1.0$ \\
    \bottomrule
    \end{tabular}
    \end{minipage}%
    \hfill
    \begin{minipage}{.45\linewidth}
    \centering
    \caption{Task-specific hyperparameters in MOBILE$^*$ with $\lambda$-returns.}
    \label{tab:task_specific_hyperparameters_mobile_lambda}
    \begin{tabular}{L{0.2\textwidth}L{0.45\textwidth}L{0.15\textwidth}}
    \toprule
    \textbf{Domain} & \textbf{Task} & $c$ \\
    \midrule
    AntMaze & \texttt{umaze         } & $1.0$ \\
            & \texttt{umaze-diverse } & $0.5$ \\
            & \texttt{medium-play   } & $0.1$ \\
            & \texttt{medium-diverse} & $0.1$ \\
            & \texttt{large-play    } & $0.1$ \\
            & \texttt{large-diverse } & $0.1$ \\
            & \texttt{ultra-play    } & $1.0$ \\
            & \texttt{ultra-diverse } & $0.5$ \\
    \bottomrule
    \end{tabular}
    \end{minipage}
\end{table}

\begin{table}[H]
    \small
    \centering
    \caption{Task-specific hyperparameter $\tau$ of LEQ in V-D4RL experiments.}
    \label{tab:task_specific_hyperparameters_vd4rl}
    \begin{tabular}{lll}
    \toprule
    \textbf{Domain} & \textbf{Task} & $\tau$ \\
    \midrule
    \texttt{walker\_walk}  & \texttt{random         }  & $0.5$ \\
            & \texttt{medium }  & $0.3$ \\
            & \texttt{medium\_replay   } & $0.3$ \\
            & \texttt{medium\_expert} & $0.1$ \\
    \texttt{cheetah\_run}  & \texttt{random         }  & $0.3$ \\
            & \texttt{medium }  & $0.1$ \\
            & \texttt{medium\_replay   } & $0.1$ \\
            & \texttt{medium\_expert} & $0.5$ \\
    \bottomrule
    \end{tabular}
\end{table}

\begin{table}[H]
    \small
    \centering
    \caption{Task-specific hyperparameter $\lambda$ of MOPO in V-D4RL experiments.}
    \label{tab:task_specific_hyperparameters_vd4rl_mopo}
    \begin{tabular}{lll}
    \toprule
    \textbf{Domain} & \textbf{Task} & $\lambda$ \\
    \midrule
    \texttt{walker\_walk}  & \texttt{random         }  & $3.0$ \\
            & \texttt{medium }  & $3.0$ \\
            & \texttt{medium\_replay   } & $3.0$ \\
            & \texttt{medium\_expert} & $3.0$ \\
    \texttt{cheetah\_run}  & \texttt{random         }  & $3.0$ \\
            & \texttt{medium }  & $3.0$ \\
            & \texttt{medium\_replay   } & $10.0$ \\
            & \texttt{medium\_expert} & $10.0$ \\
    \bottomrule
    \end{tabular}
\end{table}

\newpage
\section{Proof of the Policy Objective}
\label{sec:proof}

We show that the surrogate loss in \Cref{eq:lambda_return_expectile_policy_loss_2} leads to a better approximation for the expectile of $\lambda$-returns in \Cref{eq:lambda_return_expectile_policy_loss} than maximizing $Q_{\phi}(s, a)$. In other words, we show that optimizing the following policy objective:
\begin{equation}
    \hat{J}_{\lambda}(\theta) = \mathbb{E}_{\traj \sim p_{\psi}, \pi_{\theta}} [(W^{\tau}(Q_{\phi}(\s_t,\ac_t)>Q^{\lambda}_{t} (\traj)) Q_t^\lambda (\traj)],
\end{equation} 
leads to optimizing a lower-bias estimator of $\mathbb{E}^{\tau}_{\traj \sim p_{\psi}, \pi_{\theta}} [Q_t^\lambda (\traj)]$ than $Q_{\phi}(\s_t, \ac_t)$. 

To show this, we first prove that $\hat{Y}_{\text{new}} = \frac{\mathbb{E}[W^{\tau}(Q_{\phi}(\s_t, \ac_t) > Q_t^{\lambda} (\traj)) \cdot Q_t^{\lambda} (\traj)]}{\mathbb{E}[W^{\tau}(Q_{\phi}(\s_t, \ac_t) > Q_t^{\lambda}(\traj))]}$ is closer to $\mathbb{E}^{\tau}_{\traj \sim p_{\psi}, \pi_{\theta}} [Q_t^\lambda (\traj)]$ than $Q_{\phi}(\s_t, \ac_t)$ \rebut{for \textit{most of the situations}}. For deriving the proof, we generalize the problem by considering an arbitrary distribution $X$ and its estimate $\hat{Y}$, which correspond to $X = Q_t^{\lambda}(\tau), \hat{Y} = Q_{\phi}(\s, \ac)$. \rebut{Then, we show $\hat{Y}_{\text{new}} = \frac{\mathbb{E}[W^{\tau}(\hat{Y} > X) \cdot X]}{\mathbb{E}[W^{\tau}(\hat{Y} > X)]}$ is closer to $Y$ than $\hat{Y}$.
We split the case to two cases where $\hat{Y} \geq Y$} (\Cref{lemma:1}) \rebut{and $\hat{Y} \leq Y$} (\Cref{lemma:2}) \rebut{and show the effectiveness of $\hat{Y}_{\text{new}}$ instead of $\hat{Y}$ for each case.}

\vspace{1.0em}
\begin{lemma}
\label{lemma:1}
Let $X$ be a distribution and $Y = E^{\tau}[X]$ be a lower expectile of $X$ (i.e. $0 < \tau \leq 0.5$). \rebut{Let $\hat{Y}$ be an arbitrary \textbf{optimistic} estimate of $Y$ (i.e., $\hat{Y} \geq Y$)}, and define $W^{\tau}(\cdot) = |\tau - \1(\cdot)|$. If we let $\hat{Y}_{\text{new}} = \frac{\mathbb{E}[W^{\tau}(\hat{Y} > X) \cdot X]}{\mathbb{E}[W^{\tau}(\hat{Y} > X)]}$ be a new estimate of $Y$, then \rebut{$|\hat{Y}_{\text{new}} - Y| \leq |\hat{Y} - Y|$}.
\end{lemma}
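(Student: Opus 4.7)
The plan is to prove the stronger sandwich $Y \leq \hat{Y}_{\text{new}} \leq \hat{Y}$, from which $|\hat{Y}_{\text{new}} - Y| \leq |\hat{Y} - Y|$ is immediate. Two observations drive the argument. First, since $Y = \mathbb{E}^\tau[X]$ is the minimizer of the strictly convex loss $L(y) := \mathbb{E}[W^\tau(y > X)(y - X)^2]$, the first-order condition gives $\mathbb{E}[W^\tau(Y > X)(Y - X)] = 0$. Second, for any constant $c$ one may rewrite $\hat{Y}_{\text{new}} - c = \mathbb{E}[W^\tau(\hat{Y} > X)(X - c)]/\mathbb{E}[W^\tau(\hat{Y} > X)]$, so because the denominator is positive each comparison of $\hat{Y}_{\text{new}}$ with a constant reduces to a sign question about a single weighted integral.

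For the upper bound $\hat{Y}_{\text{new}} \leq \hat{Y}$, I would differentiate $L$: the boundary contributions at $y = X$ vanish because $(y - X)^2 = 0$ there, giving $L'(y) = -2\, \mathbb{E}[W^\tau(y > X)(X - y)]$. Convexity of $L$ (its integrand is convex in $y$ for every fixed $X$) together with $\hat{Y} \geq Y$ forces $L'(\hat{Y}) \geq 0$, i.e. $\mathbb{E}[W^\tau(\hat{Y} > X)(X - \hat{Y})] \leq 0$, so taking $c = \hat{Y}$ in the rewrite yields $\hat{Y}_{\text{new}} \leq \hat{Y}$.

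For the lower bound $\hat{Y}_{\text{new}} \geq Y$ I would instead compare the two weight functions pointwise. The difference $W^\tau(\hat{Y} > X) - W^\tau(Y > X)$ vanishes outside the interval $X \in [Y, \hat{Y})$ and equals $1 - 2\tau$ on it; on the same interval $X - Y \geq 0$. Hence
\[ \mathbb{E}\bigl[(W^\tau(\hat{Y} > X) - W^\tau(Y > X))(X - Y)\bigr] \geq 0. \]
Combining with the first-order identity $\mathbb{E}[W^\tau(Y > X)(X - Y)] = 0$ gives $\mathbb{E}[W^\tau(\hat{Y} > X)(X - Y)] \geq 0$, i.e. $\hat{Y}_{\text{new}} \geq Y$ by the rewrite with $c = Y$.

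The main subtlety, rather than any single inequality, is keeping the sign conventions of $W^\tau$ straight while manipulating weighted expectations, and invoking $\tau \leq 0.5$ at exactly the right moment: it enters only through $1 - 2\tau \geq 0$ in the pointwise weight comparison, and this is precisely what ensures the weight redistribution on $[Y, \hat{Y})$ pulls $\hat{Y}_{\text{new}}$ toward $Y$ rather than away from it. No regularity on $X$ beyond existence of the expectile is required.
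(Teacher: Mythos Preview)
Your proof is correct, and it takes a genuinely different route from the paper's. You establish the sandwich $Y \leq \hat{Y}_{\text{new}} \leq \hat{Y}$ by two clean structural observations: the upper bound follows from convexity of the expectile loss and monotonicity of its derivative (so $L'(\hat{Y}) \geq L'(Y) = 0$), while the lower bound follows from the pointwise identity $W^\tau(\hat{Y} > X) - W^\tau(Y > X) = (1-2\tau)\,\mathbb{1}(Y \leq X < \hat{Y})$ combined with the first-order condition at $Y$. The paper instead computes $|\hat{Y}_{\text{new}} - Y|$ directly: it substitutes the same weight decomposition into the fraction, simplifies through a chain of algebraic steps to the closed form
\[
|\hat{Y}_{\text{new}} - Y| \;=\; (1-2\tau)\cdot\frac{p(Y \leq X \leq \hat{Y})\,\bigl(\mathbb{E}_{Y \leq X \leq \hat{Y}}[X] - Y\bigr)}{\tau + (1-2\tau)\,p(X \leq \hat{Y})},
\]
and then bounds this by $\hat{Y} - Y$ via three successive inequalities. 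Your argument is shorter, conceptually transparent, and yields the stronger statement that $\hat{Y}_{\text{new}}$ lies between $Y$ and $\hat{Y}$ (the paper never isolates the sign of $\hat{Y}_{\text{new}} - Y$). The paper's computation, on the other hand, produces an explicit formula for the residual error, which is what they reuse in the proof of Lemma~\ref{lemma:2} and in the quantitative discussion around \Cref{eq:condition}; your approach would need a separate calculation to recover that.
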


\begin{proof} 
\small
\begin{align*}
 & |\hat{Y}_{\text{new}} - Y| \\
 & = \left\lvert \frac{\mathbb{E} [W^{\tau}(\hat{Y}>X) \cdot X]}{\mathbb{E} [W^{\tau}(\hat{Y}>X)]} - \frac{\mathbb{E} [W^{\tau}(Y>X) \cdot X]}{\mathbb{E} [W^{\tau}(Y>X)] } \right\rvert \quad\text{\qquad\qquad\qquad\qquad\qquad\qquad\;\;\;($\because$ Def. of $\hat{Y}_\text{new}$ and $Y$)} \\
 & = \left\lvert \frac{\mathbb{E} [W^{\tau}(Y>X) \cdot X] + \mathbb{E} [(1-2\tau) \cdot \1(Y \leq X \leq \hat{Y}) \cdot  X] }{\mathbb{E} [W^{\tau}(Y>X)] + \mathbb{E} [(1-2\tau) \cdot \1(Y \leq X \leq \hat{Y})]} - \frac{\mathbb{E} [W^{\tau}(Y>X) \cdot X]}{\mathbb{E} [W^{\tau}(Y>X)]} \right\rvert \quad\text{($\because$ Def. of $W^{\tau}(\cdot)$)} \\
 & = \left\lvert \frac{\mathbb{E} [W^{\tau}(Y>X)] \mathbb{E} [(1-2\tau) \cdot \1(Y \leq X \leq \hat{Y}) \cdot X] - \mathbb{E} [W^{\tau}(Y>X) \cdot X] \mathbb{E} [(1-2\tau) \cdot \1(Y \leq X \leq \hat{Y})]}{\mathbb{E} [W^{\tau}(Y>X)](\mathbb{E} [W^{\tau}(Y>X)] + \mathbb{E} [(1-2\tau) \cdot \1(Y \leq X \leq \hat{Y})])} \right\rvert \\
 & = (1-2\tau) \cdot \left\lvert \frac{\mathbb{E} [W^{\tau}(Y>X)] \mathbb{E} [\1(Y \leq X \leq \hat{Y}) \cdot X] - \mathbb{E} [W^{\tau}(Y>X) \cdot X] \mathbb{E} [\1(Y \leq X \leq \hat{Y})]}{\mathbb{E} [W^{\tau}(Y>X)](\mathbb{E} [W^{\tau}(Y>X)] + \mathbb{E} [(1-2\tau) \cdot \1(Y \leq X \leq \hat{Y})])} \right\rvert \\
 & = (1-2\tau) \cdot \left\lvert \frac{\mathbb{E} [\1(Y \leq X \leq \hat{Y}) \cdot X ] - Y p(Y \leq X \leq \hat{Y})}{\mathbb{E} [W^{\tau}(Y>X)] + \mathbb{E} [(1-2\tau) \cdot \1(Y \leq X \leq \hat{Y})]} \right\rvert \\
 & = (1-2\tau) \cdot \left\lvert \frac{p(Y \leq X \leq \hat{Y}) (\mathbb{E}_{Y \leq X \leq \hat{Y}}[X] - Y)}{\mathbb{E} [W^{\tau}(Y>X)] + \mathbb{E} [(1-2\tau) \cdot \1(Y \leq X \leq \hat{Y})]} \right\rvert \\
 & = (1-2\tau) \cdot \left\lvert \frac{p(Y \leq X \leq \hat{Y}) (\mathbb{E}_{Y \leq X \leq \hat{Y}}[X] - Y)}{(1 - \tau) - (1 - 2\tau) \cdot p(Y \leq X) + (1-2\tau) \cdot p(Y \leq X \leq \hat{Y})} \right\rvert \\
 & = (1-2\tau) \cdot \left\lvert \frac{p(Y \leq X \leq \hat{Y}) (\mathbb{E}_{Y \leq X \leq \hat{Y}}[X] - Y)}{(1 - \tau) - (1 - 2\tau) \cdot p(\hat{Y} \leq X)} \right\rvert \\
 & = (1-2\tau) \cdot \left\lvert \frac{p(Y \leq X \leq \hat{Y}) (\mathbb{E}_{Y \leq X \leq \hat{Y}}[X] - Y)}{\tau + (1 - 2\tau) \cdot p(X \leq \hat{Y})} \right\rvert \\ 
 & \leq \frac{p(Y \leq X \leq \hat{Y}) (\mathbb{E}_{Y \leq X \leq \hat{Y}}[X] - Y)}{p(X \leq \hat{Y})} \\
 & \leq \mathbb{E}_{Y \leq X \leq \hat{Y}}[X] - Y \\
 & \leq \hat{Y} - Y =|\hat{Y} - Y| 
\end{align*}
\end{proof}

\Cref{lemma:1} \rebut{shows that when the estimated value is optimistic ($\hat{Y} \geq Y$), the bias of the new estimate is always smaller than that of the original estimate. In the context of LEQ algorithm, the lemma tells that if the critic network ($\hat{Y}$) overestimates the lower-expectile of the target returns ($Y=\mathbb{E}^{\tau}[X]$), the surrogate loss ($Y_{\text{new}})$ compensates the overestimation of the critic values. 

Unfortunately, when the estimated value is pessimistic ($\hat{Y} \leq Y$), there are some exceptional cases that the surrogate loss overcompensates the underestimation, resulting in an even larger error. The boundary for these cases is characterized in }\Cref{lemma:2}:
\vspace{1.0em}

\begin{lemma}
\label{lemma:2}
Let $X$ be a distribution and $Y = E^{\tau}[X]$ be a lower expectile of $X$ (i.e. $0 < \tau \leq 0.5$). \rebut{Let $\hat{Y}$ be an arbitrary \textbf{conservative} estimate of $Y$ (i.e., $\hat{Y} \leq Y$)}, and define $W^{\tau}(\cdot) = |\tau - \1(\cdot)|$. If we let $\hat{Y}_{\text{new}} = \frac{\mathbb{E}[W^{\tau}(\hat{Y} > X) \cdot X]}{\mathbb{E}[W^{\tau}(\hat{Y} > X)]}$ be a new estimate of $Y$, then \rebut{$|\hat{Y}_{\text{new}} - Y| \leq |\hat{Y} - Y|$, when $p(X \leq \hat{Y}) \geq \frac{1}{2} (p(X \leq Y) - \frac{\tau}{1-2\tau})$}. 
\end{lemma}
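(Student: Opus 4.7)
The plan is to mirror the algebraic approach of Lemma 1, but now accounting for the fact that $\hat{Y}$ sits below $Y$, which flips which region contributes the $(1-2\tau)$ correction. First I would rewrite the weighting function as $W^{\tau}(\hat{Y} > X) = W^{\tau}(Y > X) - (1-2\tau)\mathbbm{1}(\hat{Y} < X \leq Y)$, since on the region $\{\hat{Y} < X \leq Y\}$ the two indicators disagree and the weights differ by exactly $1-\tau - \tau = 1-2\tau$. Substituting this into both the numerator and denominator of $\hat{Y}_{\text{new}}$, and then using the defining property $\mathbb{E}[W^{\tau}(Y>X)\cdot X] = Y\cdot\mathbb{E}[W^{\tau}(Y>X)]$ of the $\tau$-expectile, a short calculation gives
\begin{equation*}
    \hat{Y}_{\text{new}} - Y \;=\; \frac{(1-2\tau)\,p(\hat{Y}<X\leq Y)\,\bigl(Y - \mathbb{E}_{\hat{Y}<X\leq Y}[X]\bigr)}{\tau + (1-2\tau)\,p(X\leq \hat{Y})}.
\end{equation*}

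Next I would observe that the numerator is non-negative and the denominator is strictly positive (since $0<\tau\leq 1/2$), so $\hat{Y}_{\text{new}} \geq Y$ and $|\hat{Y}_{\text{new}} - Y|$ equals the fraction above. To compare with $|\hat{Y} - Y| = Y - \hat{Y}$, I would use the trivial bound $Y - \mathbb{E}_{\hat{Y}<X\leq Y}[X] \leq Y - \hat{Y}$, which reduces the desired inequality to the sufficient condition
\begin{equation*}
    (1-2\tau)\,p(\hat{Y} < X \leq Y) \;\leq\; \tau + (1-2\tau)\,p(X \leq \hat{Y}).
\end{equation*}
Expanding $p(\hat{Y}<X\leq Y) = p(X\leq Y) - p(X\leq \hat{Y})$ and collecting terms converts this into exactly the hypothesis $p(X\leq \hat{Y}) \geq \tfrac{1}{2}\bigl(p(X\leq Y) - \tfrac{\tau}{1-2\tau}\bigr)$, which closes the argument.

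The main obstacle I anticipate is purely bookkeeping: keeping track of the direction of the asymmetry (since here $\hat{Y}\leq Y$ rather than $\hat{Y}\geq Y$ as in Lemma 1) so that the sign of the $(1-2\tau)$ correction, and hence of $\hat{Y}_{\text{new}} - Y$, is handled correctly. There is also a small subtlety in that the reduction to the sufficient condition throws away the slack between $\mathbb{E}_{\hat{Y}<X\leq Y}[X]$ and $\hat{Y}$; this means the condition stated in the lemma is sufficient but possibly not tight, which is consistent with the lemma being phrased as a one-sided guarantee rather than a characterization.
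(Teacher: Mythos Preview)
Your proposal is correct and follows essentially the same route as the paper: the same decomposition $W^{\tau}(\hat{Y}>X)=W^{\tau}(Y>X)-(1-2\tau)\mathbbm{1}(\hat{Y}<X\le Y)$, the same closed form for $\hat{Y}_{\text{new}}-Y$, and the same two bounds ($Y-\mathbb{E}_{\hat{Y}<X\le Y}[X]\le Y-\hat{Y}$ together with the fraction being at most $1$ under the stated hypothesis). The only cosmetic difference is that the paper first pulls out $(1-2\tau)$ to rewrite the coefficient as $\frac{p(X\le Y)-p(X\le\hat{Y})}{\tfrac{\tau}{1-2\tau}+p(X\le\hat{Y})}$ before invoking the condition, whereas you keep it in the equivalent form $(1-2\tau)\,p(\hat{Y}<X\le Y)\le\tau+(1-2\tau)\,p(X\le\hat{Y})$.
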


\begin{proof} 
\small
\begin{align*}
 & |\hat{Y}_{\text{new}} - Y| \\
 & = \left\lvert \frac{\mathbb{E} [W^{\tau}(\hat{Y}>X) \cdot X]}{\mathbb{E} [W^{\tau}(\hat{Y}>X)]} - \frac{\mathbb{E} [W^{\tau}(Y>X) \cdot X]}{\mathbb{E} [W^{\tau}(Y>X)]} \right\rvert \quad\text{\qquad\qquad\qquad\qquad\qquad\qquad\;\;($\because$ Def. of $\hat{Y}_\text{new}$ and $Y$)} \\
 & = \left\lvert \frac{\mathbb{E} [W^{\tau}(Y>X) \cdot X] - \mathbb{E} [(1-2\tau) \cdot \1(\hat{Y} \leq X \leq Y) X] }{\mathbb{E} [W^{\tau}(Y>X)] - \mathbb{E} [(1-2\tau) \cdot \1(\hat{Y} \leq X \leq Y)]} - \frac{\mathbb{E} [W^{\tau}(Y>X) \cdot X]}{\mathbb{E} [W^{\tau}(Y>X)]} \quad\text{($\because$ Def. of $W^{\tau}(\cdot)$)} \right\rvert \\
 & = \left\lvert \frac{\mathbb{E} [W^{\tau}(Y>X)] \mathbb{E} [(1-2\tau) \cdot \1(\hat{Y} \leq X \leq Y) \cdot X] - \mathbb{E} [W^{\tau}(Y>X) \cdot X] \mathbb{E} [(1-2\tau) \cdot \1(\hat{Y} \leq X \leq Y)]}{\mathbb{E} [W^{\tau}(Y>X)](\mathbb{E} [W^{\tau}(Y>X)] - \mathbb{E} [(1-2\tau) \cdot \1(\hat{Y} \leq X \leq Y)])} \right\rvert \\
 & = (1-2\tau) \cdot \left\lvert \frac{\mathbb{E} [W^{\tau}(Y>X) X] \mathbb{E} [\1(\hat{Y} \leq X \leq Y)] - \mathbb{E} [W^{\tau}(Y>X)] \mathbb{E} [\1(\hat{Y} \leq X \leq Y) \cdot X]}{\mathbb{E} [W^{\tau}(Y>X)](\mathbb{E} [W^{\tau}(Y>X)] - \mathbb{E} [(1-2\tau) \cdot \1(\hat{Y} \leq X \leq Y)])} \right\rvert \\
 & = (1-2\tau) \cdot \left\lvert \frac{Y p(\hat{Y} \leq X \leq Y) - \mathbb{E} [\1(\hat{Y} \leq X \leq Y) \cdot X ]}{\mathbb{E} [W^{\tau}(Y>X)] - \mathbb{E} [(1-2\tau) \cdot \1(\hat{Y} \leq X \leq Y)]} \right\rvert \\
 & = (1-2\tau) \cdot \left\lvert \frac{p(\hat{Y} \leq X \leq Y) (Y - \mathbb{E}_{\hat{Y} \leq X \leq Y}[X])}{\mathbb{E} [W^{\tau}(Y>X)] - \mathbb{E} [(1-2\tau) \cdot \1(\hat{Y} \leq X \leq Y)]} \right\rvert \\
 & = (1-2\tau) \cdot \left\lvert \frac{p(\hat{Y} \leq X \leq Y) (Y - \mathbb{E}_{\hat{Y} \leq X \leq Y}[X])}{(1 - \tau) - (1 - 2\tau) \cdot p(Y \leq X) - (1-2\tau) \cdot p(\hat{Y} \leq X \leq Y)} \right\rvert \\
 & = (1-2\tau) \cdot \left\lvert \frac{p(\hat{Y} \leq X \leq Y) (Y -\mathbb{E}_{\hat{Y} \leq X \leq Y}[X])}{(1 - \tau) - (1 - 2\tau) \cdot p(\hat{Y} \leq X)} \right\rvert \\
 & = (1-2\tau) \cdot \left\lvert \frac{p(\hat{Y} \leq X \leq Y) (Y - \mathbb{E}_{\hat{Y} \leq X \leq Y}[X])}{\tau + (1 - 2\tau) \cdot p(X \leq \hat{Y})} \right\rvert \\ 
 & = \left\lvert \frac{p(\hat{Y} \leq X \leq Y) (Y - \mathbb{E}_{\hat{Y} \leq X \leq Y}[X])}{\frac{\tau}{1-2\tau} + p(X \leq \hat{Y})} \right\rvert \\ 
 & = \frac{p(X \leq Y) - p(X \leq \hat{Y})}{\frac{\tau}{1-2\tau} + p(X \leq \hat{Y})} \cdot (Y - \mathbb{E}_{\hat{Y} \leq X \leq Y}[X]) \\ 
 & \leq Y - \mathbb{E}_{\hat{Y} \leq X \leq Y}[X] \\
 & \leq Y - \hat{Y} =|\hat{Y} - Y| 
\end{align*}
\end{proof}

By combining \Cref{lemma:1}, \Cref{lemma:2}, we get \Cref{theorem:1} as below:

\clearpage
\begin{theorem}
\label{theorem:1}
\rebut{Let $X$ be a distribution and $Y = E^{\tau}[X]$ be a lower expectile of $X$ (i.e. $0 < \tau \leq 0.5$). Let $\hat{Y}$ be an arbitrary estimate of $Y$, and define $W^{\tau}(\cdot) = |\tau - \1(\cdot)|$. If we let $\hat{Y}_{\text{new}} = \frac{\mathbb{E}[W^{\tau}(\hat{Y} > X) \cdot X]}{\mathbb{E}[W^{\tau}(\hat{Y} > X)]}$ be a new estimate of $Y$, then $|\hat{Y}_{\text{new}} - Y| \leq |\hat{Y} - Y|$ if the following condition holds:}
\begin{align}
 \label{eq:condition}
 p(X \leq \hat{Y}) \geq \frac{1}{2} (p(X \leq Y) - \frac{\tau}{1-2\tau}). 
\end{align}
\end{theorem}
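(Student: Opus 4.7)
The plan is to derive Theorem 1 as a direct case analysis using Lemmas 1 and 2, since those two lemmas together already cover the optimistic and conservative regimes. I would split on the sign of $\hat{Y} - Y$, check that the hypothesis stated in the theorem is the union of the hypotheses of the two lemmas, and invoke each lemma in its corresponding case.

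First, in the case $\hat{Y} \geq Y$, I would apply Lemma 1 directly, which gives $|\hat{Y}_{\text{new}} - Y| \leq |\hat{Y} - Y|$ unconditionally. I would then note that the condition in equation (\ref{eq:condition}) is vacuous here: by monotonicity of the CDF, $p(X \leq \hat{Y}) \geq p(X \leq Y)$, and since $\tau \in (0, 1/2]$ implies $\frac{\tau}{1-2\tau} \geq 0$, we get $p(X \leq \hat{Y}) \geq p(X \leq Y) \geq \frac{1}{2}\bigl(p(X \leq Y) - \frac{\tau}{1-2\tau}\bigr)$. Hence the hypothesis is automatically satisfied and the conclusion follows from Lemma 1.

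Second, in the case $\hat{Y} \leq Y$, I would apply Lemma 2 directly, whose hypothesis is exactly the condition stated in Theorem 1. This gives $|\hat{Y}_{\text{new}} - Y| \leq |\hat{Y} - Y|$ in the conservative regime. Combining the two cases yields the theorem.

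The main obstacle is essentially bookkeeping rather than new analysis: I have to be careful that the condition (\ref{eq:condition}) is truly the right threshold for conservative $\hat{Y}$ and is automatically implied in the optimistic case so that a single uniform statement covers both regimes. A secondary subtlety is the edge case $\tau = 1/2$ (where $\frac{\tau}{1-2\tau}$ blows up): since the $\tau = 1/2$ expectile reduces to the expectation and the weights $W^{\tau}(\cdot) = 1/2$ become constant, $\hat{Y}_{\text{new}} = \mathbb{E}[X] = Y$ trivially, so the bound holds as equality and no further argument is needed. With these checks, the proof reduces to a short two-line invocation of the lemmas.
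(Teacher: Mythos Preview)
Your proposal is correct and follows essentially the same approach as the paper: split into the optimistic ($\hat{Y} \geq Y$) and conservative ($\hat{Y} \leq Y$) cases, invoke Lemma~1 and Lemma~2 respectively, and verify via monotonicity of the CDF that the condition~\eqref{eq:condition} is automatically satisfied in the optimistic case. Your explicit treatment of the $\tau = 1/2$ edge case (where the denominator in the condition blows up but $\hat{Y}_{\text{new}} = \mathbb{E}[X] = Y$ trivially) is a nice bit of care that the paper's proof omits.
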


\begin{proof} 
\small
We combine the two cases dealt in \Cref{lemma:1}, \Cref{lemma:2}. 

Here, $\hat{Y} \geq Y$ from \Cref{lemma:1} can be omitted, since the condition of \Cref{eq:condition} includes the case of $\hat{Y} \geq Y$.

$\because$ if $\hat{Y} \geq Y$, then $p(X \leq \hat{Y}) \geq p(X \leq Y) \geq \frac{1}{2} (p(X \leq Y) - \frac{\tau}{1-2\tau})$.
\end{proof}

\rebut{The condition of }\Cref{eq:condition} \rebut{is a conservative bound applicable to any distribution, and $|\hat{Y}_{\text{new}} - Y| \leq |\hat{Y} - Y|$ holds across much broader regions in general.} For example, \Cref{fig:surrogate} \rebut{shows that the inequality holds for 100\%, 99.9\% of the cases for normal, uniform distributions, respectively.} 

\begin{figure}[t]
    \centering
    \begin{subfigure}{0.45\textwidth}
    \begin{subfigure}{1.0\textwidth}
    \includegraphics[width=1.0\textwidth]{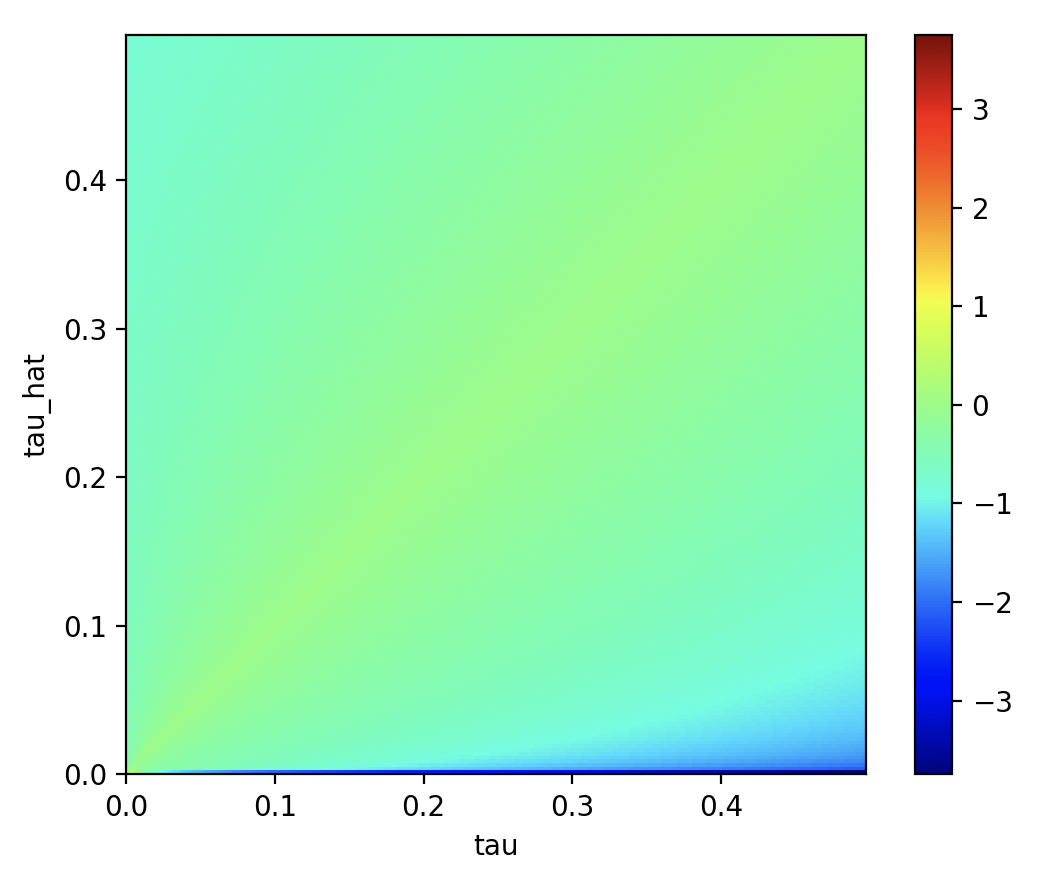}    
    \end{subfigure}
    \caption{Normal distribution}
    \end{subfigure}
    \begin{subfigure}{0.45\textwidth}
    \begin{subfigure}{1.0\textwidth}
    \includegraphics[width=1.0\textwidth]{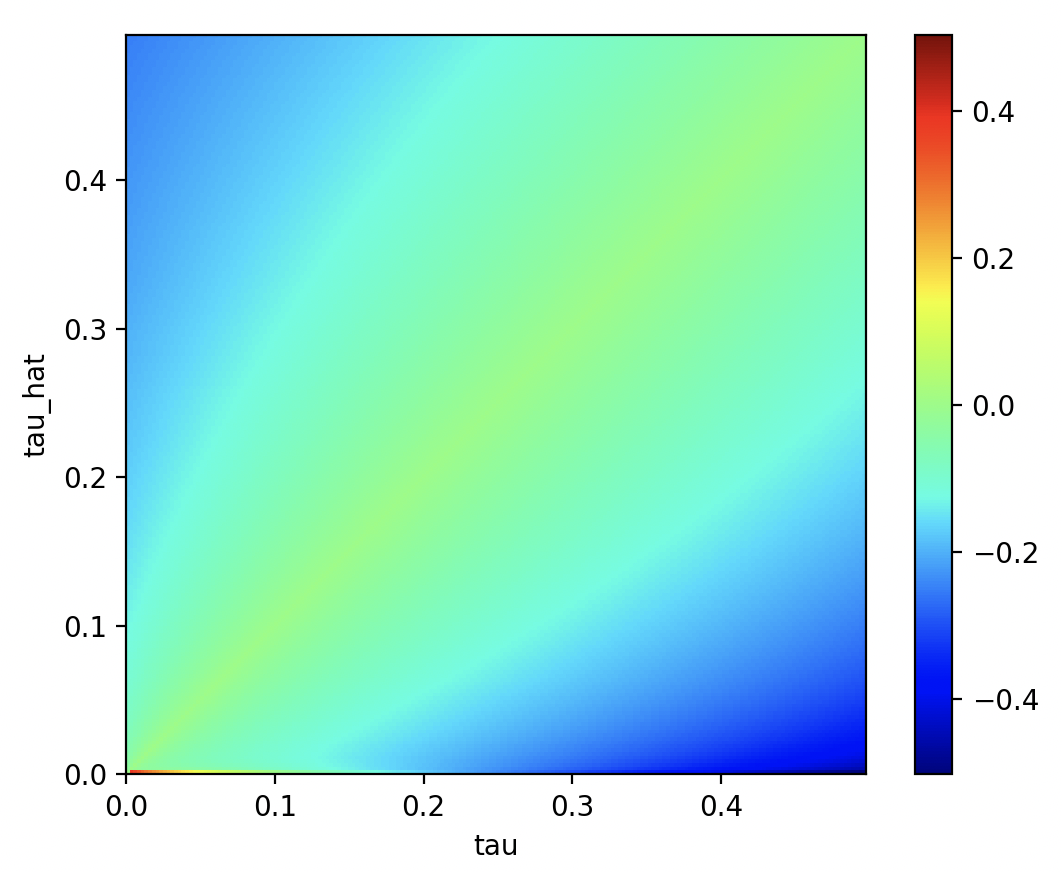}    
    \end{subfigure}
    \caption{Uniform distribution}
    \end{subfigure}
    \caption{\rebut{\textbf{Region of exceptions for the surrogate loss in realistic distributions.} We visualize the cases where the surrogate loss is worse than the direct policy optimization, for two common distributions for $X$: (a) a normal distribution $\mathcal{N}(0, 1)$ and (b) a uniform distribution $\mathcal{U}(0, 1)$. We sample $\tau$ and $\hat{\tau}$ from \texttt{arange($0, 0.5, 0.0025$)} and compute $Y=\mathbb{E}^{\tau}[X]$, $\hat{Y}=\mathbb{E}^{\hat{\tau}}[X]$. The plots show the difference of errors $|\hat{Y}_{\text{new}} - Y| - |\hat{Y} - Y|$: positive values (red to yellow) indicate that the surrogate loss performs worse than the estimate, while negative values (green to blue) indicate that the surrogate loss outperforms the estimate. There is no exception for normal distribution, and only 39 out of 40000 cases are exceptional for uniform distribution.}}
    \label{fig:surrogate}
\end{figure}

Here, we illustrate how we optimize $\mathbb{E}[W^{\tau}(Q_{\phi}(\s_t, \ac_t) > Q_t^{\lambda}(\traj)) \cdot Q_t^{\lambda} (\traj)]$ instead of $\frac{\mathbb{E}[W^{\tau}(Q_{\phi}(\s_t, \ac_t) > Q_t^{\lambda} (\traj)) \cdot Q_t^{\lambda} (\traj)]}{\mathbb{E}[W^{\tau}(Q_{\phi}(\s_t, \ac_t) > Q_t^{\lambda}(\traj)]}$. The normalizing factor $\mathbb{E}[W^{\tau}(Q_{\phi}(\s_t, \ac_t) > Q_t^{\lambda}(\traj))]$ is non-differentiable with $\traj$ and the gradient is 0 everywhere (except $Q_{\phi}(\s_t, \ac_t) = Q_t^{\lambda}(\traj)$). Thus, if we calculate the gradient of $\hat{Y}_{\text{new}}$, the gradient for the normalizing factor disappears. Therefore, we omit the normalizing factor and get an equivalent formula $\mathbb{E}[W^{\tau}(Q_{\phi}(\s_t, \ac_t) > Q_t^{\lambda} (\traj)) \cdot Q_t^{\lambda} (\traj)]$ for gradient-based optimization.

\clearpage
\section{Results for All Expectiles $\tau$}
\label{sec:more_results}

To give insights how the expectile parameter $\tau$ affects the performance of LEQ, we report the performance of LEQ with all expectile values $\{0.1, 0.3, 0.4, 0.5\}$. 
The expectile parameter $\tau$ has a trade-off -- high expectile makes the model's predictions less conservative while making a policy easily exploit the model. We recommend first trying $\tau = 0.1$, which works well for most of the tasks, and increase $\tau$ until the performance starts to drop.

\begin{table}[ht]
    \caption{\textbf{Antmaze results of LEQ with different expectiles.} We report the results in Antmaze task with expectiles value of 0.1, 0.3, 0.4, 0.5. The best value is highlighted.} 
    \label{tab:antmaze_sweep}
    \vspace{0.4em}
    \centering
    \resizebox{0.8\linewidth}{!}{
        \begin{tabular}{lcccc}
        \toprule
        \textbf{Expectile} & 0.1 & 0.3 & 0.4 & 0.5 \\
        \midrule
        \texttt{antmaze-umaze}              & $    \mb{94.4}$ {\tiny $\pm      6.3$ }& $    39.0$ {\tiny $\pm     28.1$ }& $     0.2$ {\tiny $\pm      0.4$ } & $     3.0$ {\tiny $\pm      5.5$ }   \\
        \texttt{antmaze-umaze-diverse}      & $    \mb{71.0}$ {\tiny $\pm     12.2$ }& $    23.6$ {\tiny $\pm     21.7$ }& $     4.0$ {\tiny $\pm      4.2$ } & $     0.0$ {\tiny $\pm      0.0$ }   \\
        \texttt{antmaze-medium-play}        & $    50.2$ {\tiny $\pm     39.9$ }& $    \mb{58.8}$ {\tiny $\pm     33.0$ }& $    36.0$ {\tiny $\pm     21.8$ } & $     0.6$ {\tiny $\pm      1.2$ }   \\
        \texttt{antmaze-medium-diverse}     & $    \mb{46.2}$ {\tiny $\pm     23.2$ }& $    13.2 $ {\tiny $\pm     13.3$ }& $    11.6$ {\tiny $\pm     14.8 $} & $    10.6$ {\tiny $\pm     13.3$ }  \\
        \texttt{antmaze-large-play}         & $    42.0$ {\tiny $\pm     30.6$ }& $    \mb{58.6}$ {\tiny $\pm      9.1$ }& $    52.2$ {\tiny $\pm     15.8 $} & $    42.2$ {\tiny $\pm      7.3$ }   \\
        \texttt{antmaze-large-diverse}      & $    \mb{60.6}$ {\tiny $\pm     32.1$ }& $    60.2$ {\tiny $\pm     18.3$ }& $    48.8$ {\tiny $\pm      5.8 $} & $    36.8$ {\tiny $\pm      9.7$ }   \\
        \texttt{antmaze-ultra-play}         & $    \mb{25.8}$ {\tiny $\pm     18.2 $}& $    10.8 ${\tiny $\pm      8.8$ }& $    11.6$ {\tiny $\pm     12.5 $} & $     9.2$ {\tiny $\pm     11.5$ }   \\
        \texttt{antmaze-ultra-diverse}      & $    \mb{55.8}$ {\tiny $\pm     18.3 $}& $     4.6 ${\tiny $\pm      3.4$ }& $     7.6$ {\tiny $\pm      7.3 $} & $     0.6$ {\tiny $\pm      1.2$ }   \\
        \bottomrule
        \end{tabular}
    }
\end{table}

\begin{table}[ht]
    \caption{\textbf{D4RL mujoco results of \textbf{LEQ} with different expectiles.} We report the results in D4RL mujoco task with expectiles value of 0.1, 0.3, 0.4, 0.5. The best value is highlighted.} 
    \label{tab:mujoco_sweep}
    \vspace{0.4em}
    \centering
    \resizebox{0.7\linewidth}{!}{
        \begin{tabular}{lcccc}
        \toprule
        \textbf{Expectile} & 0.1 & 0.3 & 0.4 & 0.5 \\
        \midrule
        \texttt{hopper-r}             & $    \mb{32.4}$ {\tiny $\pm      0.3 $}& $    13.7$ {\tiny $\pm      9.1 $}& $    16.4$ {\tiny $\pm      9.3 $} & $    12.5$ {\tiny $\pm     10.1 $} \\
        \texttt{hopper-m}            & $   \mb{103.4}$ {\tiny $\pm      0.3 $}& $   102.7$ {\tiny $\pm      1.7 $}& $    81.4$ {\tiny $\pm     24.8 $} & $    38.6$ {\tiny $\pm     29.2 $}   \\
        \texttt{hopper-mr}      & $   103.2$ {\tiny $\pm      1.0 $}& $   \mb{103.9}$ {\tiny $\pm      1.3 $}& $    71.5$ {\tiny $\pm     34.7 $} & $    103.8$ {\tiny $\pm     1.9 $}   \\
        \texttt{hopper-me}       & $   \mb{109.4}$ {\tiny $\pm      1.8 $}& $    108.0$ {\tiny $\pm      8.7 $}& $    64.2$ {\tiny $\pm     35.8 $} & $     33.7$ {\tiny $\pm     0.5 $}   \\
        \texttt{walker2d-r}           & $    \mb{21.5}$ {\tiny $\pm      0.1 $}& $    21.5$ {\tiny $\pm      0.5 $}& $    14.0$ {\tiny $\pm      8.8 $} & $     8.7$  {\tiny $\pm     6.7 $}   \\
        \texttt{walker2d-m}            & $    26.3$ {\tiny $\pm     37.4 $}& $    \mb{74.9}$ {\tiny $\pm     26.9 $}& $    60.3$ {\tiny $\pm     40.9 $} & $    34.8$ {\tiny $\pm     34.3 $}    \\
        \texttt{walker2d-mr}     & $    48.6$ {\tiny $\pm     19.5 $}& $    60.5$ {\tiny $\pm     27.4 $}& $    88.5$ {\tiny $\pm      3.5 $}  & $\mb{98.7}$ {\tiny $\pm      6.0 $}  \\
        \texttt{walker2d-me}     & $   \mb{108.2}$ {\tiny $\pm      1.3 $}& $    98.8$ {\tiny $\pm     28.8 $}& $   105.8$ {\tiny $\pm     25.9 $} & $33.7$ {\tiny $\pm      31.9 $}   \\
        \texttt{halfcheetah-r}        & $    23.8$ {\tiny $\pm      1.8 $}& $    \mb{30.8}$ {\tiny $\pm      3.3 $}& $    29.0$ {\tiny $\pm      2.9 $}  &  $30.2$ {\tiny $\pm      2.5 $}  \\
        \texttt{halfcheetah-m}        & $    65.3$ {\tiny $\pm      2.0 $}& $    \mb{71.7}$ {\tiny $\pm      4.4 $}& $    58.5$ {\tiny $\pm     23.8 $}  & $55.5$ {\tiny $\pm      16.7 $}  \\
        \texttt{halfcheetah-mr}  & $    60.6$ {\tiny $\pm      1.4 $}& $    55.4$ {\tiny $\pm     27.3 $}& $    \mb{65.5}$ {\tiny $\pm      1.1 $}   & $52.4$ {\tiny $\pm      26.7 $}   \\
        \texttt{halfcheetah-me}  & $   \mb{102.8}$ {\tiny $\pm      0.4 $}& $    81.5$ {\tiny $\pm     19.6 $}& $    58.1$ {\tiny $\pm     26.1 $}  & $46.3$ {\tiny $\pm      17.7 $}     \\
        \bottomrule
        \end{tabular}
    }
\end{table}

\begin{table}[ht]
    \caption{\textbf{V-D4RL results of LEQ with different expectiles.} We report the results in Antmaze task with expectiles value of 0.1, 0.3, 0.4, 0.5. The best value is highlighted.} 
    \label{tab:vd4rl_sweep}
    \vspace{0.4em}
    \centering
    \resizebox{0.8\linewidth}{!}{
        \begin{tabular}{lcccc}
        \toprule
        \textbf{Expectile} & 0.1 & 0.3 & 0.4 & 0.5 \\
        \midrule
        \texttt{walker\_walk-random} & $14.5$ {\tiny $\pm 1.1$} & $20.2$ {\tiny $\pm 3.6$} & $\mb{22.4}$ {\tiny $\pm 1.1$} & $21.8$ {\tiny $\pm 0.8$}\\
        \texttt{walker\_walk-medium} & $\mb{43.1}$ {\tiny $\pm 3.2$} & $37.2$ {\tiny $\pm 5.3$} & $35.0$ {\tiny $\pm 3.7$} & $26.6$ {\tiny $\pm 1.9$}\\
        \texttt{walker\_walk-medium\_replay} & $40.6$ {\tiny $\pm 6.6$} & $40.8$ {\tiny $\pm 3.4$} & $41.3$ {\tiny $\pm 6.3$} & $\mb{43.0}$ {\tiny $\pm 7.3$}\\
        \texttt{walker\_walk-medium\_expert} & $\mb{87.2}$ {\tiny $\pm 2.4$} & $82.7$ {\tiny $\pm 5.0$} & $77.0$ {\tiny $\pm 9.2$} & $85.0$ {\tiny $\pm 1.6$}\\
        \texttt{cheetah\_run-random} & $12.1$ {\tiny $\pm 1.9$} & $\mb{14.8}$ {\tiny $\pm 1.0$} & $14.2$ {\tiny $\pm 1.6$} & $14.6$ {\tiny $\pm 2.9$}\\
        \texttt{cheetah\_run-medium} & $25.0$ {\tiny $\pm 5.9$} & $\mb{37.9}$ {\tiny $\pm 8.0$} & $23.6$ {\tiny $\pm 11.9$} & $32.2$ {\tiny $\pm 6.2$}\\
        \texttt{cheetah\_run-medium\_replay} & $\mb{34.3}$ {\tiny $\pm 1.1$} & $32.3$ {\tiny $\pm 2.1$} & $31.2$ {\tiny $\pm 3.5$} & $31.2$ {\tiny $\pm 0.4$}\\
        \texttt{cheetah\_run-medium\_expert} & $23.9$ {\tiny $\pm 4.7$} & $18.9$ {\tiny $\pm 3.9$} & $\mb{25.1}$ {\tiny $\pm 6.6$} & $24.3$ {\tiny $\pm 7.6$}\\
        \midrule
        Total & $280.7$ & $284.8$ & $269.9$ & $278.6$ \\
        \bottomrule
        \end{tabular}
    }
\end{table}

\clearpage
\section{More Ablation Results}
\label{sec:more_ablation_results}

\paragraph{Ablation of each component in LEQ.} In \Cref{tab:d4rl_antmaze_ablation_more}, we ablate LEQ's design choices on AntMaze tasks to investigate their importance. Notably, the performance of LEQ drops significantly if we remove the Q-learning loss ($461.8$ to $312.8$) or apply expectile regression for real transitions ($461.8$ to $249.2$), showing the importance of utilizing the real transitions in long-horizon tasks. Moreover, the expectile loss from the policy learning is crucial ($461.8$ to $357.0$), since simply maximizing $\lambda$-returns, even if $Q(\s, \ac)$ is conservative, can lead to a suboptimal policy due to the lack of conservatism in the rewards from model rollouts. Minor design choices, such as EMA regularization and dataset expansion, have a small impact on the performance in AntMaze.

\begin{table}[h]
    \caption{\textbf{Ablation studies about various components in LEQ on AntMaze.} We conduct ablation studies on (1) EMA regularization, (2) dataset expansion, (3) utilizing data transitions, (4) lower expectile policy optimization, (5) lower expectile regression for data transitions
    }
    \label{tab:d4rl_antmaze_ablation_more}
    \resizebox{\linewidth}{!}{
        \begin{tabular}{l|cccccccc|c}
        \toprule
        Method & \multicolumn{2}{c}{\texttt{umaze}} & \multicolumn{2}{c}{\texttt{medium}} & \multicolumn{2}{c}{\texttt{large}} & \multicolumn{2}{c}{\texttt{ultra}} & \multirow{2}{*}{\textbf{Total}} \\        
         & \texttt{umaze} & \texttt{diverse} & \texttt{play} & \texttt{diverse} & \texttt{play} & \texttt{diverse} & \texttt{play} & \texttt{diverse} &  \\
        \midrule
        \blue{LEQ} & $\mb{94.4}$ {\tiny $\pm 6.3$}     & $71.0$ {\tiny $\pm 12.3$}    & $50.2$ {\tiny $\pm 39.9$}        & $\mb{46.2}$ {\tiny $\pm 23.2$}    & $58.6$ {\tiny $\pm 9.1 $}   & $60.2$ {\tiny $\pm 18.3$} & $25.8$ {\tiny $\pm 18.2$} & $\mb{55.8}$ {\tiny $\pm 18.3$} & $\mb{461.8}$\\
        \midrule         
        \red{- EMA regularization}  & $    \mb{96.0}$ {\tiny $\pm      4.0 $}& $    65.6$ {\tiny $\pm      5.1 $}& $    33.8$ {\tiny $\pm     31.4 $}& $    21.2$ {\tiny $\pm     26.1 $}& $    58.8$ {\tiny $\pm      8.2 $}& $    62.2$ {\tiny $\pm     15.0 $}& $     8.0$ {\tiny $\pm     13.9 $}& $    33.5$ {\tiny $\pm     35.1 $} & $379.1$ \\
        \red{- dataset expansion} & $\mb{97.4}$ {\tiny $ \pm 1.4$ }        & $63.0$ {\tiny $\pm 23.2$ }    & $\mb{58.2}$ {\tiny $\pm 28.0$}        & $28.6$ {\tiny $\pm 33.7$ }    & $56.0$ {\tiny $\pm 9.8$}        & $57.0$ {\tiny $\pm 4.5 $}     & $\mb{39.2}$ {\tiny $\pm 15.1$} & $36.0$ {\tiny $\pm 12.0$} & $435.4$ \\
        \red{- data transitions}  & $    63.0$ {\tiny $\pm     19.8 $}& $    \mb{75.3}$ {\tiny $\pm      4.2 $}& $    31.8$ {\tiny $\pm     34.0 $}& $    28.0$ {\tiny $\pm     27.0 $}& $    50.5$ {\tiny $\pm     20.9 $}& $    57.5$ {\tiny $\pm      8.6 $}& $     0.8$ {\tiny $\pm      1.3 $}& $     6.0$ {\tiny $\pm      5.1 $}  & 312.8  \\
        \red{- expectile in policy update} & $    93.4$ {\tiny $\pm      4.2 $}& $    43.4$ {\tiny $\pm     26.1 $}& $    45.8$ {\tiny $\pm     21.8 $}& $    11.4$ {\tiny $\pm     18.6 $}& $    53.8$ {\tiny $\pm      7.5 $}& $    54.0$ {\tiny $\pm      6.9 $}& $    33.8$ {\tiny $\pm     11.6 $}& $    21.4$ {\tiny $\pm     13.2 $} & $357.0$\\
        \red{+ expectile in data transitions} & $    53.4$ {\tiny $\pm     18.8 $}& $     4.8$ {\tiny $\pm      7.7 $}& $    27.2$ {\tiny $\pm     37.0 $}& $    19.2$ {\tiny $\pm     11.9 $}& $    \mb{69.0}$ {\tiny $\pm     17.4 $}& $    \mb{75.4}$ {\tiny $\pm      6.4 $}& $     0.0$ {\tiny $\pm      0.0 $}& $     0.0$ {\tiny $\pm      0.0 $}  & $249.2$\\
        \bottomrule
        \end{tabular}
    }
\end{table}

\paragraph{Ablation on pretraining.}
\Cref{tab:d4rl_antmaze_ablation_pretrain} \rebut{shows the effect of BC and FQE pretraining for LEQ and MOBILE$^{*}$ in AntMaze tasks. Without pretraining, the performance of LEQ decreases (461.8 $\rightarrow$ 322.2), especially in medium mazes (96.4 $\rightarrow$ 4.6) and ultra mazes (81.6 $\rightarrow$ 12.4) due to early instability of training, matching the observation in CBOP}~\citep{jeong2023conservative}. \rebut{However, for MOBILE$^*$, pretraining worsens the performance (285.3 $\rightarrow$ 232.2)}. 

\begin{table}[h]
    \caption{\textbf{Ablation results for pretraining on AntMaze.} Results are averaged over 5 random seeds.
    }
    \label{tab:d4rl_antmaze_ablation_pretrain}
    \resizebox{\linewidth}{!}{
        \begin{tabular}{cc|cccccccc|c}
        \toprule
        \multicolumn{2}{c}{Method} & \multicolumn{2}{c}{\texttt{umaze}} & \multicolumn{2}{c}{\texttt{medium}} & \multicolumn{2}{c}{\texttt{large}} & \multicolumn{2}{c}{\texttt{ultra}} & \multirow{2}{*}{\textbf{Total}} \\        
        conservatism & pretrain & \texttt{umaze} & \texttt{diverse} & \texttt{play} & \texttt{diverse} & \texttt{play} & \texttt{diverse} & \texttt{play} & \texttt{diverse} &  \\
        \midrule
        \blue{LEQ} & \blue{O} & $\mb{94.4}$ {\tiny $\pm 6.3$}     & $71.0$ {\tiny $\pm 12.3$}    & $50.2$ {\tiny $\pm 39.9$}        & $\mb{46.2}$ {\tiny $\pm 23.2$}    & $58.6$ {\tiny $\pm 9.1 $}   & $60.2$ {\tiny $\pm 18.3$} & $25.8$ {\tiny $\pm 18.2$} & $\mb{55.8}$ {\tiny $\pm 18.3$} & $\mb{461.8}$\\
        \blue{LEQ} & \red{X} & $\mb{94.0}$ {\tiny $\pm     1.9 $}& $     65.6$ {\tiny $\pm      6.8 $}& $    0.8$ {\tiny $\pm     1.6 $}& $    3.8$ {\tiny $\pm     7.6 $}& $    60.4$ {\tiny $\pm     13.4 $}& $    59.8$ {\tiny $\pm      12.5 $}& $     11.6$ {\tiny $\pm      12.4 $}& $     26.2$ {\tiny $\pm      22.4 $}  & $322.2$\\
        \midrule
        \red{MOBILE$^*$} & \blue{O} & $    73.4$ {\tiny $\pm     12.6 $}& $     46.0$ {\tiny $\pm      9.4 $}& $    24.6$ {\tiny $\pm     23.2 $}& $    11.6$ {\tiny $\pm     9.9 $}& $    31.0$ {\tiny $\pm     8.4 $}& $    33.2$ {\tiny $\pm      10.6 $}& $     12.4$ {\tiny $\pm      6.9 $}& $     0.0$ {\tiny $\pm      0.0 $}  & $232.2$\\
        \red{MOBILE$^*$} & \red{X} & $53.8 $ {\tiny $\pm 26.8$ } & $22.5$ {\tiny $\pm 22.2$ }    & $54.0$ {\tiny $\pm 5.8$ }        & $\mb{49.5}$ {\tiny $\pm 6.2$ }    & $28.3$ {\tiny $\pm 6.0$}        & $28.0$ {\tiny $\pm 11.4$}     & $\mb{25.5}$ {\tiny $\pm 6.9$ } & $23.8$ {\tiny $\pm 15.8$ } & $\mb{285.3}$ \\
        \bottomrule
        \end{tabular}
    }
\end{table}

\paragraph{Ablation study on dataset expansion.}
\Cref{tab:mujoco_ablation} shows the ablation results on the dataset expansion in D4RL MuJoCo tasks. The results show that the dataset expansion generally improves the performance, especially in Hopper environments.

\begin{table}[ht]
    \caption{\textbf{D4RL MuJoCo ablation results for dataset expansion.} Results are averaged over $5$ random seeds. The dataset expansion generally improves the performance of LEQ.} 
    \label{tab:mujoco_ablation}
    \centering
    \resizebox{0.6\linewidth}{!}{
    \begin{tabular}{lcccc}
    \toprule
    \textbf{Dataset} & \textbf{LEQ (ours)} & \textbf{LEQ w/o Dataset Expansion} \\
    \midrule
    \texttt{hopper-r}        & $\mb{32.4} $ {\tiny $\pm 0.3$ }  & $17.6 $ {\tiny $\pm 8.6$ } \\
    \texttt{hopper-m}        & $\mb{103.4}$ {\tiny $\pm 0.3$ }  & $52.7 $ {\tiny $\pm 45.3$} \\
    \texttt{hopper-mr}       & $\mb{103.9}$ {\tiny $\pm 1.3$ }  & $\mb{103.7}$ {\tiny $\pm 1.3$ } \\
    \texttt{hopper-me}       & $\mb{109.4}$ {\tiny $\pm 1.8$ }  & $79.7 $ {\tiny $\pm 42.4$} \\
    \midrule
    \texttt{walker2d-r}      & $\mb{21.5} $ {\tiny $\pm 0.1$ }  & $\mb{20.5} $ {\tiny $\pm 2.2$ } \\
    \texttt{walker2d-m}      & $74.9 $ {\tiny $\pm 26.9$}  & $\mb{87.2} $ {\tiny $\pm 4.3$ } \\
    \texttt{walker2d-mr}     & $\mb{98.7} $ {\tiny $\pm 6.0$ }  & $78.7 $ {\tiny $\pm 35.5$} \\
    \texttt{walker2d-me}     & $\mb{108.2}$ {\tiny $\pm 1.3$ }  & $\mb{110.4}$ {\tiny $\pm 0.8$ } \\
    \midrule
    \texttt{halfcheetah-r}   & $\mb{30.8} $ {\tiny $\pm 3.3$ }  & $\mb{27.7} $ {\tiny $\pm 2.2$ } \\
    \texttt{halfcheetah-m}   & $\mb{71.7} $ {\tiny $\pm 4.4$ }  & $\mb{71.6} $ {\tiny $\pm 3.8$ } \\
    \texttt{halfcheetah-mr}  & $\mb{65.5} $ {\tiny $\pm 1.1$}   & $54.4 $ {\tiny $\pm 26.3$} \\
    \texttt{halfcheetah-me}  & $\mb{102.8}$ {\tiny $\pm 0.4$ }  & $83.9 $ {\tiny $\pm 28.0$} \\
    \midrule
    \textbf{Total}         & $\mb{923.2}$ & $788.2$ \\
    \bottomrule
    \end{tabular}
    }
\end{table}

\paragraph{Ablation on using deterministic policy.}

\rebut{We parameterize the stochastic policy $\pi(\cdot | s)$ as $\pi(a|s) = \tanh(N(\mu(s), \sigma^2 (s)))$ as }~\citet{haarnoja2018soft} and run LEQ with this configuration. \rebut{However, we found that the policy quickly becomes deterministic, because LEQ inadvertently penalizes the stochasticity of the policy while penalizing the uncertainty of the model rollouts. Specifically, when we use a stochastic policy, the stochasticity of the policy contributes to increasing the variance of $\lambda$-returns, which are therefore heavily penalized by LEQ.

To compensate for this effect, we use an entropy bonus coefficient $\alpha = 0.0003$}.
As demonstrated in \Cref{tab:d4rl_antmaze_ablation_deter}, \rebut{stochastic policy shows slightly worse performance compared to deterministic policy (461.8 $\rightarrow$ 380.4). However, we believe that LEQ can be extended to stochastic policies with further hyperparameter tuning on the stochasticity of the policy.}

\begin{table}[h]
    \caption{\textbf{Ablation results on using deterministic policy.} Results are averaged over 5 random seeds.
    }
    \label{tab:d4rl_antmaze_ablation_deter}
    \resizebox{\linewidth}{!}{
        \begin{tabular}{c|cccccccc|c}
        \toprule
        Policy & \multicolumn{2}{c}{\texttt{umaze}} & \multicolumn{2}{c}{\texttt{medium}} & \multicolumn{2}{c}{\texttt{large}} & \multicolumn{2}{c}{\texttt{ultra}} & \multirow{2}{*}{\textbf{Total}} \\        
        & \texttt{umaze} & \texttt{diverse} & \texttt{play} & \texttt{diverse} & \texttt{play} & \texttt{diverse} & \texttt{play} & \texttt{diverse} &  \\
        \midrule
        \blue{Deterministic} & $\mb{94.4}$ {\tiny $\pm 6.3$}     & $\mb{71.0}$ {\tiny $\pm 12.3$}    & $\mb{50.2}$ {\tiny $\pm 39.9$}        & $\mb{46.2}$ {\tiny $\pm 23.2$}    & $\mb{58.6}$ {\tiny $\pm 9.1 $}   & $\mb{60.2}$ {\tiny $\pm 18.3$} & $25.8$ {\tiny $\pm 18.2$} & $\mb{55.8}$ {\tiny $\pm 18.3$} & $\mb{461.8}$\\
        \red{Stochastic}  & $\mb{91.0}$ {\tiny $\pm     7.9 $}& $     \mb{70.4}$ {\tiny $\pm      6.7 $}& $    35.8$ {\tiny $\pm     29.7 $}& $    0.0$ {\tiny $\pm     0.0 $}& $    41.8$ {\tiny $\pm     16.5 $}& $    50.2$ {\tiny $\pm      7.8 $}& $     \mb{43.4}$ {\tiny $\pm      25.0 $}& $     47.8$ {\tiny $\pm      23.5$}  & $380.4$\\
        \bottomrule
        \end{tabular}
    }
\end{table}

\paragraph{Ablation on using learned terminal function.}

\rebut{We conduct an ablation study to evaluate the impact of using learned terminal functions instead of the ground-truth terminal function. For the terminal prediction network, we use a 3-layer MLP with a hidden size of 256, consistent with the architecture of the policy and critic networks. As shown in }\Cref{tab:d4rl_antmaze_ablation_deter}, \rebut{replacing the true terminal function with a learned terminal function leads to a significant drop in performance.}

\begin{table}[h]
    \caption{\textbf{Ablation results on using learned terminal function.} Results are averaged over 5 random seeds.
    }
    \label{tab:d4rl_antmaze_ablation_terminal}
    \resizebox{\linewidth}{!}{
        \begin{tabular}{c|cccccccc|c}
        \toprule
        Terminal & \multicolumn{2}{c}{\texttt{umaze}} & \multicolumn{2}{c}{\texttt{medium}} & \multicolumn{2}{c}{\texttt{large}} & \multicolumn{2}{c}{\texttt{ultra}} & \multirow{2}{*}{\textbf{Total}} \\        
        & \texttt{umaze} & \texttt{diverse} & \texttt{play} & \texttt{diverse} & \texttt{play} & \texttt{diverse} & \texttt{play} & \texttt{diverse} &  \\
        \midrule
        \blue{Groundtruth} & $\mb{94.4}$ {\tiny $\pm 6.3$}     & $\mb{71.0}$ {\tiny $\pm 12.3$}    & $50.2$ {\tiny $\pm 39.9$}        & $\mb{46.2}$ {\tiny $\pm 23.2$}    & $\mb{58.6}$ {\tiny $\pm 9.1 $}   & $\mb{60.2}$ {\tiny $\pm 18.3$} & $\mb{25.8}$ {\tiny $\pm 18.2$} & $\mb{55.8}$ {\tiny $\pm 18.3$} & $\mb{461.8}$\\
        \red{Learned}  & $66.8$ {\tiny $\pm     5.2$}& $     44.0$ {\tiny $\pm      22.3 $}& $\mb{54.0}$ {\tiny $\pm 31.4 $}& $    2.0$ {\tiny $\pm     4.0 $}& $    29.4$ {\tiny $\pm     8.1 $}& $    0.0$ {\tiny $\pm      0.0$}& $     15.6$ {\tiny $\pm      5.9 $}& $     20.8$ {\tiny $\pm      5.2$}  & $232.6$\\
        \bottomrule
        \end{tabular}
    }
\end{table}

\paragraph{Ablation study on horizon length in V-D4RL.}
\Cref{tab:vd4rl_ablation} shows that when we use the default hyperparameter of DreamerV3, $H=15$, the performance drops in V-D4RL. The result suggests that we need to use a shorter imagination horizon for offline model-based RL.

\begin{table}[ht]
    \caption{\textbf{V-D4RL ablation results for horizon length.} Results are averaged over $3$ random seeds. $H=5$ generally improves the performance of LEQ.} 
    \label{tab:vd4rl_ablation}
    \centering
    \resizebox{0.65\linewidth}{!}{
    \begin{tabular}{lcccc}
    \toprule
    \textbf{Dataset} & \textbf{H=5} & \textbf{H=15} \\
    \midrule
    \texttt{walker\_walk-random} & $\mb{23.2}$ {\tiny $\pm 1.1$} & $15.4$ {\tiny $\pm 1.5$} \\
    \texttt{walker\_walk-medium} & $\mb{50.0}$ {\tiny $\pm 3.6$} & $40.0$ {\tiny $\pm 3.6$} \\
    \texttt{walker\_walk-medium\_replay} & $\mb{44.0}$ {\tiny $\pm 28.5$} & $\mb{45.4}$ {\tiny $\pm 7.5$} \\
    \texttt{walker\_walk-medium\_expert} & $\mb{90.3}$ {\tiny $\pm 1.8$} & $79.9$ {\tiny $\pm 1.4$} \\
    \midrule
    \texttt{cheetah\_run-random} & $\mb{15.3}$ {\tiny $\pm 1.7$} & $\mb{15.5}$ {\tiny $\pm 3.0$} \\
    \texttt{cheetah\_run-medium} & $\mb{40.1}$ {\tiny $\pm 14.6$} & $32.2$ {\tiny $\pm 4.3$} \\
    \texttt{cheetah\_run-medium\_replay} & $\mb{39.9}$ {\tiny $\pm 2.0$} & $36.4$ {\tiny $\pm 2.1$} \\
    \texttt{cheetah\_run-medium\_expert} & $27.7$ {\tiny $\pm 12.6$} & $\mb{29.3}$ {\tiny $\pm 4.1$} \\
    \midrule
    Total & $\mb{330.5}$ & $294.1$ \\
    \bottomrule
    \end{tabular}
    }
\end{table}

\clearpage
\section{Video Predictions on V-D4RL}
\label{sec:video_pred}

\Cref{fig:video_pred} presents the imagined trajectory generated by the action sequence during evaluation in the V-D4RL datasets: \texttt{walker\_walk-medium\_replay}, \texttt{walker\_walk-medium\_expert}, \texttt{cheetah\_run-medium\_replay}, \texttt{cheetah\_run-medium\_expert}, from the top to the bottom. Overall, world models trained in \texttt{medium\_replay} datasets show better prediction compared to \texttt{medium\_expert} dataset, likely due to their broader state distribution of the dataset. Nevertheless, LEQ achieves high performance on the \texttt{walker\_walk-medium\_expert} dataset, despite inaccurate predictions, highlighting robustness of LEQ in handling imperfect world models.

\begin{figure}[ht]
    \centering
    \begin{subfigure}{0.95\textwidth}
        \includegraphics[width=\textwidth]{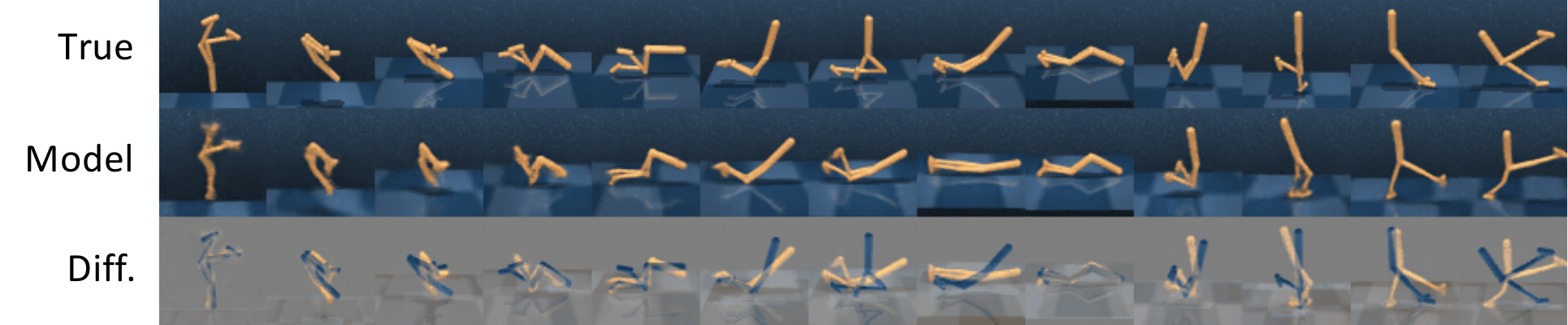}
        \caption{\texttt{walker\_walk-medium\_replay}}
        \vspace{0.4em}
    \end{subfigure}
    
    \begin{subfigure}{0.95\textwidth}
        \includegraphics[width=\textwidth]{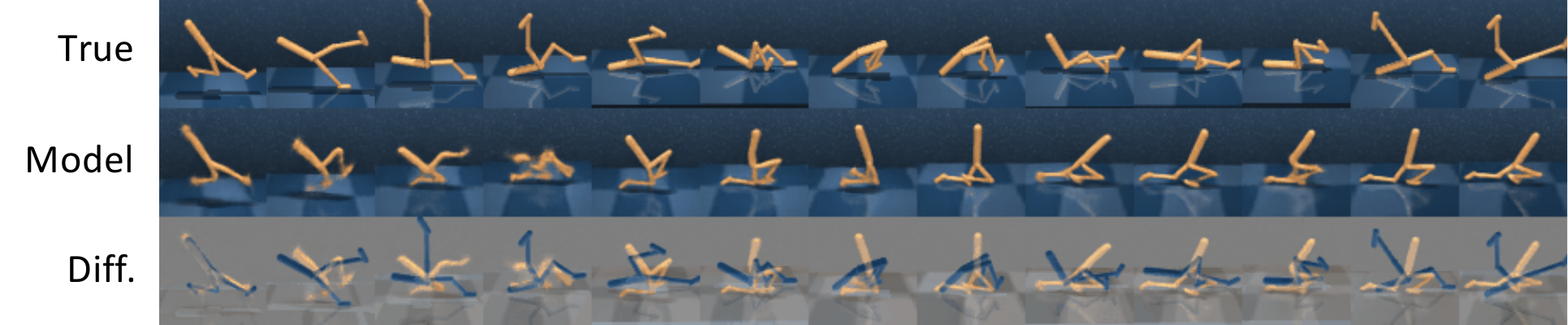}
        \caption{\texttt{walker\_walk-medium\_expert}}
        \vspace{0.4em}
    \end{subfigure}
    
    \begin{subfigure}{0.95\textwidth}
        \includegraphics[width=\textwidth]{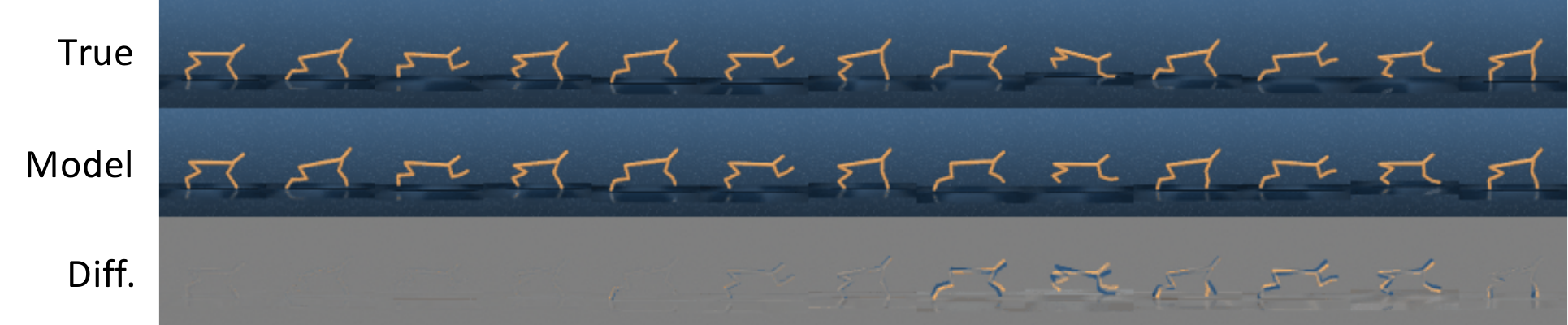}
        \caption{\texttt{cheetah\_run-medium\_replay}}
        \vspace{0.4em}
    \end{subfigure}
    
    \begin{subfigure}{0.95\textwidth}
        \includegraphics[width=\textwidth]{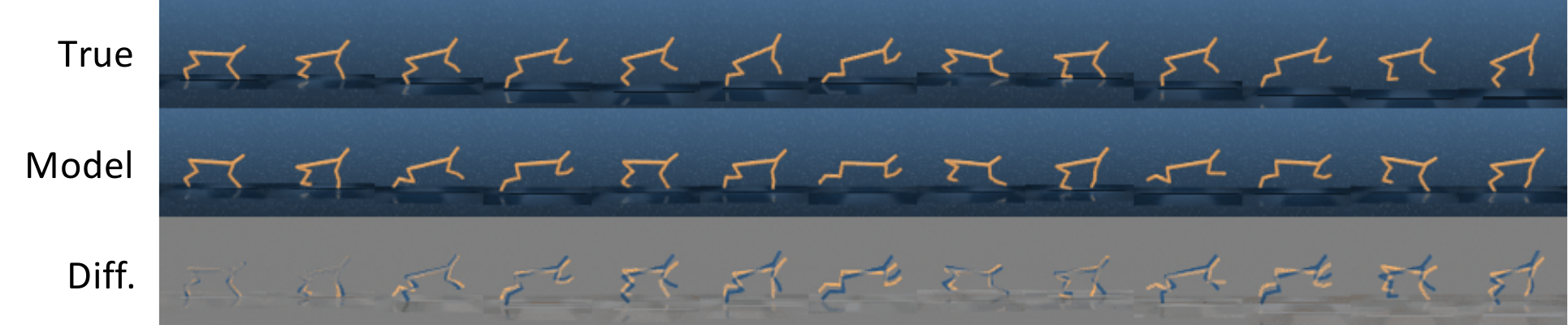}
        \caption{\texttt{cheetah\_run-medium\_expert}}
    \end{subfigure}
    \caption{\textbf{Video prediction results in V-D4RL.} The world model receives initial $5$ frames and simulates $64$ additional frames based on the action sequence during evaluation. For each trajectory, each row displays the true image, model prediction, and difference between these two, respectively. }
    \label{fig:video_pred}
\end{figure}

\end{document}